\documentclass{article}


\usepackage[final]{neurips_2024}




\usepackage[utf8]{inputenc} 
\usepackage[T1]{fontenc}    
\usepackage[colorlinks=true, linkcolor=black, citecolor=black, urlcolor=black]{hyperref} 
\usepackage{booktabs}       
\usepackage{amsfonts}       
\usepackage{xcolor}         

\usepackage{graphicx}
\usepackage{amsmath}
\usepackage{amsthm}
\usepackage{csquotes}
\usepackage{algorithm}
\usepackage{algorithmic}
\usepackage[capitalise]{cleveref}
\usepackage{multirow}
\usepackage{enumitem}
\usepackage{caption}
\usepackage{wrapfig}

\let\oldcref\cref
\renewcommand{\cref}[1]{\mbox{\oldcref{#1}}}

\DeclareMathOperator*{\argmax}{argmax}
\DeclareMathOperator*{\argmin}{argmin}
\newcommand{\reg}{\mathrm{reg}}
\newcommand{\rl}{\mathrm{rl}}
\newtheorem{lemma}{Lemma}
\newtheorem{theorem}{Theorem}
\newcommand{\eqill}[1]{\textcolor{blue}{\text{\scriptsize (#1)~~~~}}}

\title{Reinforcement Learning with Adaptive Regularization for Safe Control of Critical Systems}

%

\author{%
  Haozhe Tian\thanks{Corresponding author} \quad Homayoun Hamedmoghadam \quad Robert Shorten \quad Pietro Ferraro\\
  Dyson School of Design Engineering\\
  Imperial College London\\
  SW7 2AZ, London, UK \\
  \texttt{\{haozhe.tian21, h.hamed, r.shorten, p.ferraro\}@imperial.ac.uk}
}

\begin{document}

\maketitle

\begin{abstract}
Reinforcement Learning (RL) is a powerful method for controlling dynamic systems, but its learning mechanism can lead to unpredictable actions that undermine the safety of critical systems. Here, we propose RL with Adaptive Regularization (RL-AR), an algorithm that enables safe RL exploration by combining the RL policy with a policy regularizer that hard-codes the safety constraints. RL-AR performs policy combination via a \enquote{focus module,} which determines the appropriate combination depending on the state---relying more on the safe policy regularizer for less-exploited states while allowing unbiased convergence for well-exploited states. In a series of critical control applications, we demonstrate that RL-AR not only ensures safety during training but also achieves a return competitive with the standards of model-free RL that disregards safety.

\end{abstract}

\section{Introduction}
\label{sec:intro}

A wide array of control applications, ranging from medical to engineering, fundamentally deals with \textit{critical systems}, i.e., systems of vital importance where the control actions have to guarantee no harm to the system functionality. Examples include managing nuclear fusion \citep{degrave2022magnetic}, performing robotic surgeries \citep{datta2021reinforcement}, and devising patient treatment strategies \citep{komorowski2018artificial}. Due to the critical nature of these systems, the optimal control policy must be explored while ensuring the safety and reliability of the control algorithm.

Reinforcement Learning (RL) aims to identify the optimal policy by learning from an agent's interactions with the controlled environment. RL has been widely used to control complex systems \citep{silver2016mastering, ouyang2022training}; however, the learning of an RL agent involves trial and error, which can violate safety constraints in critical system applications \citep{henderson2018deep, recht2019tour, cheng2019control}. To date, developing reliable and efficient RL-based algorithms for real-world \enquote{single-life} applications, where the control must avoid unsafety from the first trial \citep{chen2022you}, remains a challenge. The existing safe RL algorithms either fail to ensure safety during the training phase \citep{achiam2017constrained, yu2022reachability} or require significant computational overhead for action verification \citep{cheng2019end, anderson2020neurosymbolic}. As a result, classic control methods are often favored in critical applications, even though their performance heavily relies on the existence of an accurate model of the environment.

Here, we address the safety issue of RL in scenarios where \enquote{estimated} environment models are available (or can be built) to derive sub-optimal control policy priors. These scenarios are representative of many real-world critical applications \citep{hovorka2002partitioning, liepe2014framework, hippisley2017development, rathi2021driving}. Consider the example of devising a control policy that prescribes the optimal drug dosages for regulating a patient's health status. This is a single-life setting where no harm to the patient is tolerated during policy exploration. From available records of other patients, an estimated patient model can be built to predict the response to different drug dosages and ensure adherence to the safety bounds (set based on clinical knowledge). However, a new patient's response can deviate from the estimated model, which poses a significant challenge in control adaptability and patient treatment performance.

We propose a method, \textit{RL with Adaptive Regularization} (RL-AR), that simultaneously shows the safety and adaptability properties required for critical single-life applications. The method interacts with the actual environment using two parallel agents. The first (safety regularizer) agent avoids unsafe states by leveraging the forecasting ability of the estimated model. The second (adaptive) agent is a model-free RL agent that promotes adaptability by learning from actual environment interactions. Our method introduces a \enquote{focus module} that performs state-dependent combinations of the two agents' policies. This approach allows immediate safe deployment in the environment by initially prioritizing the safety regularizer across all states. The focus module gradually learns to apply appropriate policy combinations depending on the state---relying more on the safety regularizer for less-exploited states while allowing unbiased convergence for well-exploited states. 

We analytically demonstrate that: i) RL-AR regulates the harmful effects of overestimated RL policies, and ii) the learning of the state-dependent focus module does not prevent convergence to the optimal RL policy. We simulate a series of safety-critical environments with practically obtainable sub-optimal estimated models (e.g., from real-life sampled measurements). Our empirical results show that even with more than 60\% parameter mismatches between the actual environment model and the estimated model, RL-AR ensures safety during training while converging to the control performance standard of model-free RL approaches that prioritize return over safety.

\section{Preliminaries}
\label{secprel}

Through environment interactions, an RL agent learns a policy that maximizes the expected cumulative future reward, i.e. the expected return. We formalize the environment as a Markov Decision Process (MDP) $\mathcal{M}=(\mathcal{S}, \mathcal{A}, P, r, \gamma)$, where $\mathcal{S}$ is a finite set of states, $\mathcal{A}=\{ a \in \mathbb{R}^k: \underline{a} \leq a \leq \overline{a} \}$ is a convex action-space, $P: \mathcal{S}\times\mathcal{A} \rightarrow \mathcal{P}(\mathcal{S})$ is the state transition function, $r: \mathcal{S}\times\mathcal{A} \rightarrow [-R_\text{max}, R_\text{max}]$ is the reward function, and $\gamma \in (0, 1)$ is a discount factor. Let $\pi$ denote a stochastic policy $\pi:\mathcal{S}\rightarrow \mathcal{P}(\mathcal{A})$, the value function $V^\pi$ and the action-value function $Q^\pi$ are:
\begin{align}
\begin{split}
V^\pi(s_t) = \mathbb{E}_{a_t, s_{t+1}, \dots}\left[\sum_{i=0}^\infty\gamma^{i}r(s_{t+i}, a_{t+i})\right],~~
Q^\pi(s_t, a_t) = \mathbb{E}_{s_{t+1},\dots}\left[\sum_{i=0}^\infty\gamma^{i}r(s_{t+i}, a_{t+i})\right],
\end{split}
\label{eqrlprinciple}
\end{align}
where $a_t\sim \pi(s_t)$, $s_{t+1}\sim P(s_t, a_t)$ for $t\geq0$. The optimal policy $\pi^\star = \argmax_\pi V^\pi(s)$ maximizes the expected return for any state $s$. Both $V^\pi$ and $Q^\pi$ satisfy the Bellman equation \citep{bellman1966dynamic}:
\begin{align}
\begin{split}
V^\pi(s) = \mathbb{E}_{a, s'}\left[ r(s, a) + \gamma V^\pi(s') \right],~~
Q^\pi(s, a) = \mathbb{E}_{s'}\left[ r(s, a) + \gamma \mathbb{E}_{a'\sim\pi(s')}\left[Q^\pi(s', a')\right] \right].
\end{split}
\label{eqbellman}
\end{align}

For practical applications with complex $\mathcal{S}$ and $\mathcal{A}$, $Q^\pi$ and $\pi$ are approximated with neural networks $Q_\phi$ and $\pi_\theta$ with learnable parameters $\phi$ and $\theta$. To stabilize the training of $Q_\phi$ and $\pi_\theta$, they are updated using samples $\mathcal{B}$ from a Replay Buffer $\mathcal{D}$ \citep{mnih2013playing}, which stores each previous environment transitions $e = (s, a, s', r, d)$, where $d$ equals 1 for terminal states and 0 otherwise.

In this work, we are interested in acting on a safe regularized RL policy that can differ from the raw RL policy. RL approaches that allow learning from a different acting policy are referred to as \enquote{Off-policy} RL. The RL agent in our proposed algorithm follows the state-of-the-art off-policy RL algorithm: Soft Actor-Critic (SAC) \citep{haarnoja2018soft}, which uses a multivariate Gaussian policy to explore environmental uncertainties and prevent getting stuck in sub-optimal policies. For $Q$-network updates, SAC mitigates the overestimation bias by using the clipped double $Q$-learning, which updates the two $Q$-networks $Q_{\phi_i}, i=1, 2$ using gradient descent with the gradient:
\begin{align}
\begin{split}
    \nabla_{\phi_i} & \frac{1}{|\mathcal{B}|}\sum_{(s, a, s', r, d)\in\mathcal{B}} (Q_{\phi_i}(s, a) - y)^2, \ \ \ \ i=1,2, \\
    &y = r + \gamma(1-d)\left(\min_{i=1,2}Q_{\phi_{targ,i}}(s', a') - \alpha \log P_{\pi_\theta}(a' \mid s')\right),\ \ \ \  a'\sim\pi_\theta(s'),
\end{split}
\label{eqsacq}
\end{align}
where the entropy regularization term $\log P_{\pi_\theta}(a' \mid s')$ encourages exploration, thus avoiding local optima. Target $Q$-networks $\phi_{targ, i}$ are used to reduce drastic changes in value estimates and stabilize training. The target $Q$-network parameters are initialized with $\phi_{targ, i} = \phi_i$, $i=1,2$. Each time $\phi_1$ and $\phi_2$ are updated, $\phi_{targ, 1}, \phi_{targ, 2}$ slowly track the update using $\tau\in(0, 1)$:
\begin{align}
    \phi_{targ,i} = \tau \phi_{targ,i} + (1-\tau) \phi_{i},\ \  i=1,2.
\label{eqsmooth}
\end{align}
For policy updates, the policy network $\pi_\theta$ is updated using gradient ascent with the gradient:
\begin{align}
\begin{split}
    \nabla_\theta\frac{1}{|\mathcal{B}|}\sum_{s\in\mathcal{B}} \left(\min_{i=1,2}Q_{\phi,i}(s, a_\theta(s)) -\alpha \log P_{\pi_\theta}(a\mid s)\right),\ \ \ \  a_\theta(s)\sim\pi_\theta(s).
\end{split}
\label{eqsacpolicy}
\end{align}

\section{Methodology}
\label{secmetho}

\begin{figure}[ht]
\centering
\centerline{\includegraphics[width=0.83\columnwidth]{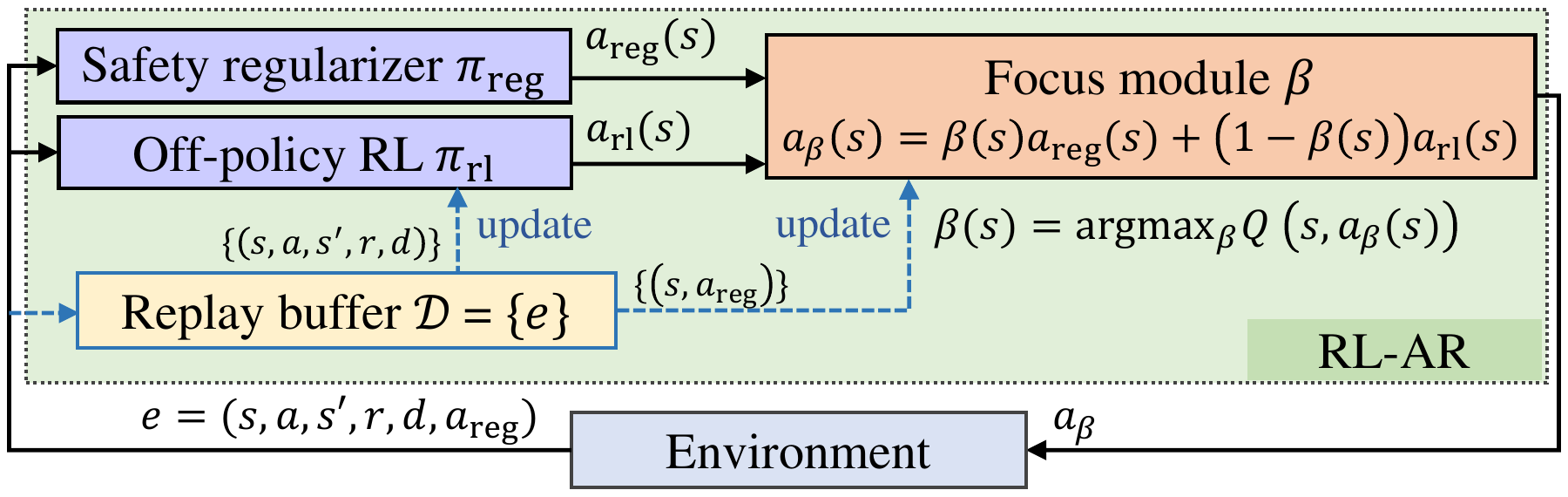}}
\caption{Schematic overview of the proposed RL-AR algorithm. RL-AR integrates the policies of the RL agent and the safety regularizer agent using a state-dependent focus module, which is updated to maximize the expected return of the combined policy.}
\label{figSche}
\end{figure}

Here, we propose RL-AR, an algorithm for the safe training and deployment of RL in safety-critical applications. A schematic view of the RL-AR procedures is shown in \cref{figSche}. RL-AR comprises two parallel agents and a focus module: (i) The \textit{safety regularizer} agent follows a deterministic policy $\pi_\reg: \mathcal{S}\rightarrow\mathcal{A}$ proposed by a constrained model predictive controller (MPC); (ii) The \textit{Off-policy RL} agent is an adaptive agent with $\pi_\rl: \mathcal{S}\rightarrow\mathcal{P}(\mathcal{A})$ that can learn from an acting policy that is different from $\pi_\rl$; (iii) The \textit{focus module} learns a state-dependent weight $\beta: \mathcal{S} \rightarrow [0,1]$ for combining the deterministic $a_\reg(s)=\pi_\reg(s)$ and the stochastic $a_\rl(s)\sim \pi_\rl(s)$. Among the components, the safety regularizer has a built-in estimated environment model $\tilde{f}:\mathcal{S}\times\mathcal{A} \rightarrow \mathcal{S}$ that is different from the actual environment model, while the off-policy RL agent and focus module are dynamically updated using observed interactions in the actual environment.

The RL-AR workflow is as follows: (i) $\pi_\reg(s)$ generates $a_\reg(s)$, which hard-codes safety constraints in the optimization problem over a period forecasted by $\tilde{f}$. The forecasting ability anticipates and prevents running into unsafe states for the critical system; (ii) $\pi_\rl(s)$ generates $a_\rl(s)$ to allow stochastic exploration and adaptation to the actual environment; (iii) $\beta(s)$ is initialized to $\beta(s)\geq 1-\epsilon$, $\forall s\in\mathcal{S}$, hence prioritizing the safe $\pi_\reg$ before $\pi_\rl$ learns a viable policy. As more interactions are observed for a state $s$ and the expected return of $\pi_\rl(s)$ improves, $\beta(s)$ gradually shifts the focus from the initially suboptimal $\pi_\reg(s)$ to $\pi_\rl(s)$. 

\subsection{The safety regularizer}

The safety regularizer of RL-AR is a constrained MPC, which, at any state $s_t$, optimizes the $N$-step system behavior forecasted using the estimated environment model $\tilde{f}$ by solving the following constrained optimization problem:
\begin{align}
\begin{split}
    \min_{a_{t:t+N-1}} & \sum_{k=t}^{t+N-1} J_k(s_k, a_k) + J_N(s_{t+N})\\
    \text{s.t.~~~~} &s_{k+1} = \tilde{f}(s_k, a_k), g(s_k)\geq 0, a_{k}\in \mathcal{A},
\end{split}
\label{eqopt}
\end{align}
where $J_k(s_k, a_k)$ and $J_N(s_{t+N})$ are the stage and terminal cost functions and $g(s_k)\geq 0$ is the safety constraint. By hard-coding the safety constraints in the optimization (via $g(s_k)\geq 0$) over the prediction horizon, MPC prevents failure events that are not tolerated in critical applications. MPC iteratively solves for the N-step optimal actions in each time step and steers the environment to the desired state. At any time step $t$, solving the optimization problem in \cref{eqopt} yields a sequence of $N$ actions $a_{t:t+N-1}$, with only the first action $a_t$ in the sequence adopted for the current time step, i.e., $a_\reg(s_t)=\pi_\reg(s_t)=a_t$. The system transitions from $s_{t}$ to $s_{t+1}$ by taking the action $a_\reg(s_t)$, and the optimization problem is solved again over $\{t+1:t+1+N\}$ to obtain $a_\reg(s_{t+1})$. For practical applications with continuous state space, the optimization problem in \cref{eqopt} is efficiently solved using the Interior Point Optimizer \citep{Andersson2019}. Since MPC solves similar problems with slight variations at each time step, the computational complexity is further reduced by using the solution from the previous step as the initial guess.

\begin{algorithm}[tb]
   \caption{RL-AR}
   \label{alg:RL-AR}
\begin{algorithmic}[1]
    \STATE {\bfseries Initialization:} empty replay buffer $\mathcal{D}$; MPC controller with estimated environment model $\tilde{f}$; policy network $\pi_\theta$; $Q$-networks $Q_{\phi_i}$ and target $Q$-networks $Q_{\phi_{targ, i}}$ with $\phi_{targ, i}=\phi_i$, $i=1,2$; pretrained focus module $\beta_\psi$ with $\beta_\psi(s) \geq 1-\epsilon, \forall s\in \mathcal{S}$; time step $t=0$.
    \REPEAT
    \STATE Observe state $s$
    \STATE Take action $a=\beta_\psi(s) a_\reg(s) + (1-\beta_\psi(s)) a_\rl(s)$, ~~ $a_\reg(s) = \pi_\reg(s), a_\rl(s) \sim \pi_\theta(s)$
    \STATE Store transition $e = (s, a, s', r, d, a_\reg)$ in $\mathcal{D}$
    
    \IF{$t>time\ to\ update$}
        \STATE Randomly sample a batch $\mathcal{B}$ of transitions from $\mathcal{D}$
        \STATE Update $Q_{\phi_i}, i=1,2$, by gradient descent with \cref{eqsacq}
        \STATE Update $\pi_\theta$ by gradient ascent with \cref{eqsacpolicy}
        \STATE Update $\beta_\psi$ by gradient ascent with:
            {\footnotesize
             \begin{equation*}
                \nabla_{\psi} \frac{1}{|\mathcal{B}|}\sum_{(s, a_\reg)\in\mathcal{B}} \min_{i=1,2} Q_{\phi_i}\left(s, \beta_\psi(s) a_\reg+(1-\beta_\psi(s))a_\rl(s)\right), ~~ a_\rl(s)\sim\pi_\theta(s)
            \end{equation*}
            }
        \STATE Update $Q_{\phi_{targ, i}}, i=1,2$, using \cref{eqsmooth}
    \ENDIF
    \STATE $t = t + 1$
    \UNTIL{$convergence$ is $true$}
\end{algorithmic}
\end{algorithm}

\subsection{Policy regularization}

The focus module in RL-AR combines the actions proposed by the safety regularizer agent and the RL agent using a weighted sum. At state $s$, the combined policy $\pi_\beta$ takes the following action $a_\beta(s)$:
\begin{align}
\begin{split}
     a_\beta(s) = \beta(s) a_\reg(s) + (1-\beta(s)) a_\rl(s), ~~ a_\reg(s)=\pi_\reg(s), a_\rl(s)\sim\pi_\rl(s).
\end{split}
\label{eqmixp}
\end{align}

\begin{lemma}
(Policy Regularization) In any state $s\in\mathcal{S}$, for a multivariate Gaussian RL policy $\pi_\rl$ with mean $\bar{\pi}_\rl(s)$ and covariance matrix $\Sigma = \mathrm{diag}(\sigma_1^2(s), \sigma_2^2(s), \dots, \sigma_k^2(s)) \in \mathbb{R}^{k\times k}$, the expectation of the combined action $a_\beta(s)$ derived from \cref{eqmixp} is the solution to the following regularized optimization with regularization parameter $\lambda={\beta(s)}/{\left(1-\beta(s)\right)}$:
\begin{equation}
    \mathbb{E}\left[a_\beta(s)\right] = \argmin_{a} \left\|a-\bar{\pi}_\rl(s)\right\|_\Sigma + \frac{\beta(s)}{1-\beta(s)} \left\|a-a_\reg(s)\right\|_\Sigma.
\end{equation}
\label{theolemma1}
\end{lemma}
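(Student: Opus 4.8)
The plan is to compute both sides explicitly and match them. First I would evaluate the left-hand side. Since $a_\reg(s)=\pi_\reg(s)$ is deterministic and $a_\rl(s)\sim\pi_\rl(s)$ is Gaussian with mean $\bar{\pi}_\rl(s)$, linearity of expectation applied to \cref{eqmixp} immediately gives
\begin{equation}
\mathbb{E}\left[a_\beta(s)\right] = \beta(s)\,a_\reg(s) + (1-\beta(s))\,\bar{\pi}_\rl(s).
\end{equation}
This is just a convex combination of the regularizer action and the RL mean, with no dependence on the covariance $\Sigma$.

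Next I would solve the optimization on the right-hand side. Reading $\|\cdot\|_\Sigma$ as the squared Mahalanobis form $\|x\|_\Sigma = x^\top \Sigma^{-1} x$, the objective is a sum of two positive-definite quadratic forms and is therefore strictly convex, so its unique minimizer is characterized by the first-order condition. Writing $\mu=\bar{\pi}_\rl(s)$, $a_\reg=a_\reg(s)$, and $\lambda=\beta(s)/(1-\beta(s))$, setting the gradient to zero yields
\begin{equation}
2\Sigma^{-1}(a-\mu) + 2\lambda\,\Sigma^{-1}(a-a_\reg) = 0.
\end{equation}
The decisive simplification is that the \emph{same} weighting matrix multiplies both terms, so it cancels (being invertible), leaving the $\Sigma$-independent equation $(1+\lambda)a = \mu + \lambda a_\reg$, i.e. $a^\star = (\mu+\lambda a_\reg)/(1+\lambda)$. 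Because $\Sigma$ is diagonal, one may equivalently verify this coordinate by coordinate: each scalar problem $\min_{a_j}\big[(a_j-\mu_j)^2+\lambda(a_j-r_j)^2\big]/\sigma_j^2$ has minimizer $(\mu_j+\lambda r_j)/(1+\lambda)$, independent of $\sigma_j$.

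Finally I would substitute $\lambda=\beta(s)/(1-\beta(s))$, using $1+\lambda = 1/(1-\beta(s))$, to obtain $a^\star = (1-\beta(s))\mu + \beta(s)\,a_\reg = \beta(s)\,a_\reg(s)+(1-\beta(s))\,\bar{\pi}_\rl(s)$, which coincides exactly with $\mathbb{E}\left[a_\beta(s)\right]$ computed above, completing the argument. I expect the only real subtlety to be the convention for $\|\cdot\|_\Sigma$: the identity holds precisely because both penalty terms are squared and share the same weighting, so that the weighting cancels in the stationarity condition (the result is the same whether one reads $\|x\|_\Sigma$ as $x^\top\Sigma^{-1}x$ or $x^\top\Sigma x$). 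With a genuine un-squared sum of two norms, the minimizer would instead collapse onto one of the two endpoints rather than their interior weighted average, so making clear that the objective is a sum of \emph{squared} weighted norms is the key point to get right.
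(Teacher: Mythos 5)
Your proof is correct, but it takes a genuinely different and more elementary route than the paper. The paper's proof (following \citet{cheng2019control}) works at the level of distributions: it observes that $a_\beta(s)$ is Gaussian with mean $\beta(s)a_\reg(s)+(1-\beta(s))\bar{\pi}_\rl(s)$ and covariance $(1-\beta(s))^2\Sigma$, factors its density as a product of two Gaussian densities centered at $a_\reg(s)$ and $\bar{\pi}_\rl(s)$ with covariances $\tfrac{1}{\beta}\Sigma$ and $\tfrac{1}{1-\beta}\Sigma$ via the product-of-Gaussians identity, and then reads off the stated $\argmin$ from the exponent, using the fact that a Gaussian's mean coincides with its mode. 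You instead compute $\mathbb{E}[a_\beta(s)]$ directly by linearity and solve the quadratic program on the right-hand side by its first-order condition, observing that $\Sigma^{-1}$ cancels. Both arguments are valid; yours is shorter and makes transparent two facts the paper's derivation obscures, namely that the minimizer is entirely independent of $\Sigma$ and that the identity is purely algebraic (it needs no Gaussianity beyond identifying the mean of $a_\rl(s)$ with $\bar{\pi}_\rl(s)$). What the paper's route buys in exchange is the stronger distributional statement that the whole law of $a_\beta(s)$, not just its mean, arises as a normalized product of an ``RL factor'' and a ``regularizer factor,'' which supports the Bayesian-fusion reading of the focus module. Your closing caveat about the norm convention is well taken and matches the paper: the appendix defines $\|x\|_\Sigma = x^{T}\Sigma^{-1}x$, i.e.\ a \emph{squared} Mahalanobis form despite the norm-like notation, and, as you note, with genuine un-squared norms the minimizer of the sum would generically sit at one of the two anchor points rather than at their interior convex combination, so the lemma would fail.
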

We provide the proof of \cref{theolemma1} in \cref{appl1}, which is a state-dependent extension of the proof in \citep{cheng2019control}. \cref{theolemma1} shows that the state-dependent $\beta(s)$ offers a safety mechanism on top of the safety regularizer. Since $\beta(s)$ is initialized close to 1 for $\forall s\in\mathcal{S}$, a strong regularization ($\lambda\rightarrow\infty$) from the safety regularizer policy is applied at the early stages of training. As learning progresses, the stochastic combined policy inevitably encounters rarely visited states, where $\pi_\rl$ is poor due to the overestimated $Q$. However, the regularization parameter $\lambda$ remains large for these states, thus preventing the combined policy from safety violations by regularizing its deviation from the regularizer's safe policy. This deviation is quantified in the following theorem.

\begin{theorem}
Assume the reward $R$ and the transition probability $P$ of the MDP $\mathcal{M}$ are Lipshitz continuous over $\mathcal{A}$ with Lipschitz constants $L_R$ and $L_P$. For any state $s \in \mathcal{S}$, the difference in expected return between following the combined policy $\pi_\beta$ and following the safety regularizer policy $\pi_\reg$, i.e., $|V^{\pi_\beta}(s) - V^{\pi_\reg}(s)|$, has the upper-bound:
\begin{align}
\begin{split}
    |V^{\pi_\beta}(s) - V^{\pi_\reg}(s)| \leq \frac{(1-\gamma)|\mathcal{S}|L_R + \gamma |\mathcal{S}| L_P R_\text{max}}{(1-\gamma)^2} (1-\beta(s))\Delta a,
\end{split}
\end{align}
where $|\mathcal{S}|$ is the cardinality of $\mathcal{S}$, $\Delta a=|a_\rl(s)-a_\reg(s)|$ is the bounded action difference at $s$.
\label{theopolicydev}
\end{theorem}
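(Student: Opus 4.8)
The plan is to treat this as a policy-perturbation (simulation-lemma) argument on the Bellman equation \cref{eqbellman}, exploiting that the two policies differ only through their chosen actions. The first step is to record the action gap: since $\pi_\reg$ is deterministic and $a_\beta(s)=\beta(s)a_\reg(s)+(1-\beta(s))a_\rl(s)$ by \cref{eqmixp}, we have $a_\beta(s)-a_\reg(s)=(1-\beta(s))(a_\rl(s)-a_\reg(s))$, so the per-state action deviation is exactly $\|a_\beta(s)-a_\reg(s)\|=(1-\beta(s))\Delta a$. This is the quantity that must propagate through the value recursion, so the entire proof reduces to showing that the value gap is a discounted, Lipschitz-amplified image of this action gap.

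Next I would write both value functions through the Bellman equation and subtract. Setting $\Delta V(s)=V^{\pi_\beta}(s)-V^{\pi_\reg}(s)$, I would split the difference at each state into three pieces: (i) the immediate reward gap $r(s,a_\beta)-r(s,a_\reg)$; (ii) the future-value gap evaluated under a \emph{common} transition law; and (iii) the transition-law gap applied to a \emph{common} value function. Piece (i) is controlled by Lipschitz continuity of $R$, giving $|r(s,a_\beta)-r(s,a_\reg)|\le L_R(1-\beta(s))\Delta a$. Piece (ii) is the self-referential term $\gamma\,\mathbb{E}_{s'}[\Delta V(s')]$, bounded by $\gamma\|\Delta V\|_\infty$. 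Piece (iii) is where the cardinality factor enters: using per-next-state Lipschitz continuity of $P$, $|P(s'\mid s,a_\beta)-P(s'\mid s,a_\reg)|\le L_P(1-\beta(s))\Delta a$, and summing over the $|\mathcal{S}|$ possible next states while bounding the common value function by $\|V^{\pi_\reg}\|_\infty\le R_\text{max}/(1-\gamma)$, yields a contribution of order $\gamma|\mathcal{S}|L_P(1-\beta(s))\Delta a\cdot R_\text{max}/(1-\gamma)$.

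Collecting the three pieces gives a contraction-style inequality $\|\Delta V\|_\infty\le C\,(1-\beta(s))\Delta a+\gamma\|\Delta V\|_\infty$, and dividing through by the surviving $(1-\gamma)$ produces the $(1-\gamma)^2$ denominators in the statement. I would present this compactly through the resolvent form $V^{\pi}=(I-\gamma P^{\pi})^{-1}r^{\pi}$, since the perturbation identity $V^{\pi_\beta}-V^{\pi_\reg}=(I-\gamma P^{\pi_\beta})^{-1}\big[(r^{\pi_\beta}-r^{\pi_\reg})+\gamma(P^{\pi_\beta}-P^{\pi_\reg})V^{\pi_\reg}\big]$ makes both the $\|(I-\gamma P^{\pi_\beta})^{-1}\|_\infty\le 1/(1-\gamma)$ resolvent bound and the two Lipschitz estimates transparent. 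I note that this clean route gives the reward term as $L_R/(1-\gamma)$ rather than $|\mathcal{S}|L_R/(1-\gamma)$; since $|\mathcal{S}|\ge 1$ the stated prefactor holds a fortiori, and the exact $|\mathcal{S}|L_R$ constant is recovered if the reward gap is summed over the state space in the recursive form rather than estimated in $\|\cdot\|_\infty$.

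The main obstacle is two-fold. First, the recursion couples every state, whereas the statement isolates the single-state quantity $(1-\beta(s))\Delta a$; to factor this out cleanly I would pass to a uniform (worst-case over states) bound on the per-state action deviation, so the stated $(1-\beta(s))\Delta a$ should be read as the governing deviation magnitude. Second, $\pi_\rl$ and hence $\pi_\beta$ are stochastic, so strictly the reward- and transition-gap estimates should be taken in expectation over $a_\rl(s)\sim\pi_\rl(s)$; I would either invoke the mean action $\bar{\pi}_\rl(s)$ from \cref{theolemma1} or carry the expectation through Jensen's inequality, neither of which alters the Lipschitz constants. The remaining work is purely the bookkeeping of $|\mathcal{S}|$ and powers of $(1-\gamma)$ so that the two terms land with the claimed denominator $(1-\gamma)^2$.
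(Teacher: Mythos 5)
Your proposal is correct and follows essentially the same route as the paper: the resolvent/perturbation identity $V^{\pi_\beta}-V^{\pi_\reg}=(I-\gamma P^{\pi_\beta})^{-1}\bigl[(r^{\pi_\beta}-r^{\pi_\reg})+\gamma(P^{\pi_\beta}-P^{\pi_\reg})V^{\pi_\reg}\bigr]$ is exactly the paper's vectorized Bellman decomposition, with the same Neumann-series bound $\|(I-\gamma P^{\pi_\beta})^{-1}\|_\infty\le 1/(1-\gamma)$ and the same two Lipschitz estimates on the reward and transition gaps. Your observation that the reward term naturally comes out as $L_R/(1-\gamma)$ (tighter than the stated $|\mathcal{S}|L_R/(1-\gamma)$, which the paper obtains via a H\"older step pairing $\|\cdot\|_\infty$ with $\|\cdot\|_1$) is accurate, and your caveats about the state-dependence of $(1-\beta(s))\Delta a$ and the stochasticity of $\pi_\rl$ apply equally to the paper's own argument.
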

We provide the proof of \cref{theopolicydev} in \cref{apptd}. \cref{theopolicydev} shows that when a state $s$ has not been sufficiently exploited and its corresponding $\beta(s)$ updates have been limited accordingly, the sub-optimality of the RL policy $\pi_\rl$ has limited impact on the expected return of the combined policy $\pi_\beta$, which is the actual acting policy. This is because $1-\beta(s)$ remains close to zero at this stage, leading to only minor expected return deviations from the safety regularizer's policy $\pi_\reg$.

\subsection{Updating the focus module}

The focus module derives the policy combination (from the policies of safety regularizer and off-policy RL agent) that maximizes the expected return. For any state $s$, the state-dependent focus weight $\beta(s)$ is learned through updates according to the following objective:
\begin{align}
\begin{split}
        \beta'(s) = \argmax_{\beta \in [0, 1]} \mathbb{E}\left[Q^{\pi_\beta}(s, \beta a_\reg(s)+(1-\beta) a_\rl(s))\right], ~ a_\reg(s)=\pi_\reg(s), a_\rl(s)\sim\pi_\rl(s).
\end{split}
\label{eqattp}
\end{align}
Equation \eqref{eqattp} is similar to the actor loss in actor-critic methods, however, instead of optimizing the policy network, \cref{eqattp} optimizes $\beta(s)$ for policy combination.

Compared to a scalar combination weight that applies the same policy combination across all states (e.g., as in \citep{cheng2019control}), the updated state-dependent weight $\beta'(s)$ in \cref{eqattp} guarantees monotonic performance improvement at least in the tabular cases, i.e., the update $\beta'(s_t)$ at a state $s_t$ results in $V^{\pi_{\beta'}}(s) \geq V^{\pi_{\beta}}(s)$ for all states $s\in\mathcal{S}$, where $\pi_{\beta'}$ is the combined policy proposed by $\beta'$. This can be proved by observing that the update in \cref{eqattp} results in a non-negative advantage for all states $s$, i.e., $Q^{\pi_\beta}(s, \pi_{\beta'})\geq Q^{\pi_\beta}(s, \pi_\beta), \forall s\in\mathcal{S}$, where (with slight abuse of notation) we use $Q(s, \pi)$ to denote $Q(s, a)$ with $a\sim \pi(s)$. See \cref{theothem2} in \cref{appt2} for the detailed proof.

\setcounter{theorem}{2}

\begin{lemma}
(Combination Weight Convergence) For any state $s$, assume the RL policy $\pi_\rl$ converges to the optimum policy $\pi^\star$ that satisfies $Q(s,\pi^\star) > Q(s, \pi), \forall \pi\neq\pi^\star$, then $\beta'(s)=0$ will be the solution to \cref{eqattp} that achieves the optimal policy combination.
\label{theolemmabeta}
\end{lemma}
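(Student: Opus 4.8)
The plan is to recognize that the focus-module objective in \cref{eqattp}, despite its actor-critic appearance, is nothing other than the value of the combined policy evaluated at $s$. By construction the combined action $a_\beta(s)=\beta a_\reg(s)+(1-\beta)a_\rl(s)$ with $a_\rl(s)\sim\pi_\rl(s)$ is distributed exactly according to $\pi_\beta(s)$, so the standard identity $V^{\pi}(s)=\mathbb{E}_{a\sim\pi(s)}[Q^{\pi}(s,a)]$ (a direct consequence of the Bellman relations) gives
\begin{equation*}
    \mathbb{E}\!\left[Q^{\pi_\beta}(s,\beta a_\reg(s)+(1-\beta)a_\rl(s))\right]=V^{\pi_\beta}(s)=Q(s,\pi_\beta),
\end{equation*}
in the paper's notation $Q(s,\pi)=V^\pi(s)$. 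Hence \cref{eqattp} is equivalent to $\beta'(s)=\argmax_{\beta\in[0,1]}Q(s,\pi_\beta)$, and it suffices to show that $\beta=0$ maximizes $Q(s,\pi_\beta)$.

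Next I would evaluate the objective at $\beta=0$. Setting $\beta=0$ removes the regularizer contribution entirely, so $\pi_\beta=\pi_\rl=\pi^\star$ and the objective equals $Q(s,\pi^\star)=V^{\pi^\star}(s)$. By the assumed strict optimality, $Q(s,\pi^\star)>Q(s,\pi)$ for every $\pi\neq\pi^\star$, so this is the global maximum of $Q(s,\pi)$ over all policies, in particular over the one-parameter family $\{\pi_\beta\}_{\beta\in[0,1]}$.

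It then remains to verify that no $\beta\in(0,1]$ can match this value, which follows once one checks $\pi_\beta\neq\pi^\star$ for $\beta>0$: the deterministic regularizer action $a_\reg(s)$ shifts the mean of the sampled combined action and contracts its covariance to $(1-\beta)^2\Sigma$, both away from $\pi_\rl=\pi^\star$, so $\pi_\beta$ is a genuinely distinct policy and the strict inequality $Q(s,\pi_\beta)<Q(s,\pi^\star)$ applies. Therefore $\beta=0$ is the unique maximizer and $\beta'(s)=0$.

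I expect the only real obstacle to be justifying the first step rigorously. One must confirm that the expectation in \cref{eqattp} is taken over $a_\rl(s)\sim\pi_\rl(s)$, so that the evaluated action is genuinely $\pi_\beta$-distributed and the objective collapses to $V^{\pi_\beta}(s)$ rather than an off-distribution quantity; attempting instead to bound the objective via the pointwise domination $Q^{\pi_\beta}\le Q^{\pi^\star}$ does not close, since the contracted action distribution could in principle raise the expected $Q^{\pi^\star}$. One must also dispose of the degenerate edge case in which $a_\reg(s)$ already coincides with the optimal action and $\pi^\star$ is deterministic, where every $\beta$ yields $\pi_\beta=\pi^\star$ and $\beta'(s)=0$ remains a valid (if non-unique) solution, consistent with the claim. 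Everything after the identity is immediate from the optimality assumption, so the conceptual weight of the proof sits entirely in that reduction.
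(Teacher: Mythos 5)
Your proof is correct and follows essentially the same route as the paper's: any $\beta>0$ yields a combined policy distinct from $\pi^\star$, which the strict-optimality assumption then rules out as a maximizer of \cref{eqattp}. You are in fact somewhat more careful than the paper, which treats the sampled $a_\rl(s)$ as if it were a fixed value equal to $a^\star(s)$ and infers $a_\reg(s)\neq a^\star(s)$ from the global statement $\pi_\reg\neq\pi^\star$; your reduction of the objective to $V^{\pi_\beta}(s)$, the covariance-contraction observation, and your handling of the degenerate case $a_\reg(s)=a^\star(s)$ close those small gaps.
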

\cref{theolemmabeta} follows as $\pi_\reg\neq \pi^\star$ due to the sub-optimal model used to derive $\pi_\reg$. Let $a^\star(s)\sim\pi^\star(s)$ denote the optimum action at state $s$. If $\beta(s) \neq 0$, then $\beta(s) a_\reg(s)+(1-\beta(s)) a_\rl(s) = \beta(s) a_\reg(s)+(1-\beta(s)) a^\star(s) \neq a^\star(s)$. Therefore, the solution to \cref{eqattp}, i.e., the updated focus weight $\beta'(s)$, can only be 0.

\begin{theorem}
(Policy Combination Bias) For any state $s$, the distance between the combined action $a_\beta(s)$ and the optimal action $a^\star(s)$ has the following lower-bound:
\begin{align}
\begin{split}
    |a_\beta(s)-a^\star(s)| \geq |a_\reg(s)-a^\star(s)| - (1-\beta(s)) |a_\reg(s)-a_\rl(s)|.
\end{split}
\end{align}
If a Gaussian RL policy $\pi_\rl$ converges to the optimum policy $\pi^\star(s)$ with $Q(s,\pi^\star) > Q(s, \pi), \forall \pi\neq\pi^\star$, then the combined policy $\pi_\beta(s)$ can have unbiased convergence to the optimum Gaussian policy $\pi^\star$ with total variance distance $D_\mathrm{TV}(\pi_\beta(s), \pi^\star(s)) = 0$.
\label{theothem3}
\end{theorem}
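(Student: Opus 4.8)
The plan is to treat the two claims separately. The lower bound is a one-line consequence of the reverse triangle inequality after rewriting $a_\beta$ relative to $a_\reg$, while the unbiased-convergence claim rests on \cref{theolemmabeta} together with an explicit computation of the Gaussian parameters of the combined policy.

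For the lower bound, I would first collect the $a_\reg(s)$ terms in \cref{eqmixp} to obtain
\begin{equation*}
    a_\beta(s) = a_\reg(s) - (1-\beta(s))\bigl(a_\reg(s) - a_\rl(s)\bigr).
\end{equation*}
Subtracting $a^\star(s)$ from both sides and applying the reverse triangle inequality $|x-y|\geq|x|-|y|$ (valid in any norm) with $x=a_\reg(s)-a^\star(s)$ and $y=(1-\beta(s))(a_\reg(s)-a_\rl(s))$ yields the stated bound, where $1-\beta(s)\geq0$ lets the scalar be pulled out of the norm. This step is routine.

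For the unbiased-convergence claim, the key observation is that combining the deterministic $a_\reg(s)$ with the Gaussian $a_\rl(s)\sim\mathcal{N}\bigl(\bar\pi_\rl(s),\Sigma\bigr)$ through the affine rule in \cref{eqmixp} again produces a Gaussian, namely $\pi_\beta(s)=\mathcal{N}\bigl(\beta(s)a_\reg(s)+(1-\beta(s))\bar\pi_\rl(s),\,(1-\beta(s))^2\Sigma\bigr)$. I would then invoke \cref{theolemmabeta}, which guarantees that under the stated strict-optimality assumption on $\pi_\rl$ the focus module converges to $\beta'(s)=0$. Substituting $\beta(s)=0$ collapses the mean to $\bar\pi_\rl(s)$ and the covariance to $\Sigma$, so $\pi_\beta(s)$ coincides with $\pi_\rl(s)$ as a distribution; since $\pi_\rl=\pi^\star$ at convergence by hypothesis, $\pi_\beta(s)=\pi^\star(s)$ and hence $D_\mathrm{TV}(\pi_\beta(s),\pi^\star(s))=0$.

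The point I expect to require the most care---and what makes the convergence genuinely \emph{unbiased} rather than merely mean-matching---is that both the mean and the covariance must agree with those of $\pi^\star$ simultaneously. For any $\beta(s)>0$ the covariance $(1-\beta(s))^2\Sigma$ is strictly contracted while the mean is shifted toward $a_\reg(s)\neq a^\star(s)$, so $D_\mathrm{TV}>0$; it is only at the exact value $\beta(s)=0$ that the scaling factor $(1-\beta(s))^2$ reduces to $1$ and the deterministic perturbation vanishes at the same time. I would therefore emphasize that the distributional match hinges entirely on the precise value $\beta'(s)=0$ being attainable, which is exactly what the strict optimality of $\pi^\star$ in \cref{theolemmabeta} secures.
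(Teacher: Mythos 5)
Your proposal is correct and follows essentially the same route as the paper: the lower bound via computing $|a_\reg(s)-a_\beta(s)|=(1-\beta(s))|a_\reg(s)-a_\rl(s)|$ and applying the (reverse) triangle inequality, and the convergence claim via \cref{theolemmabeta} forcing $\beta'(s)=0$ so that $\pi_\beta$ collapses to $\pi_\rl=\pi^\star$. The only cosmetic difference is that you read off the Gaussian mean and covariance of the affine combination directly, whereas the paper routes the mean through the regularized-optimization characterization of \cref{theolemma1}; your explicit remark that both mean and covariance must match simultaneously (and only do so at exactly $\beta(s)=0$) is a sound and slightly more transparent justification of the same conclusion.
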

The proof of \cref{theothem3} is given in \cref{appt1}. \cref{theothem3} shows that by adaptively updating $\beta(s)$, the unbiased convergence of the combined policy can be achieved assuming i) a unique optimum solution and ii) the convergence of the RL agent, where the former follows naturally for most real-life control applications and the latter is well-established in the RL literature (the convergence of the specific RL agent used in RL-AR was proved in \citep{haarnoja2018soft}).

\cref{alg:RL-AR} shows the pseudo-code of RL-AR, where $Q$, $\pi_\rl$, and $\beta$ are approximated with neural networks for practical applications with large or continuous state space. Note that the policy regularization (\cref{theolemma1}) and the convergence of RL-AR to the optimum RL policy (\cref{theolemmabeta} and \cref{theothem3}) still hold when using function approximation. For the RL agent, we take the standard approach of approximating  $Q^\pi$ and $\pi$ with neural networks $Q_\phi$ and $\pi_\theta$. The focus module $\beta(s)$ is approximated with a neural network $\beta_\psi$ with outputs scaled to the range $(0,1)$. Before learning begins, $\beta_\psi$ is pretrained to output values close to 1 (e.g., $\beta_\psi(s) \geq 1-\epsilon$) for all states to prioritize the safe $\pi_\reg$. While interacting with the environment, each transition $e = (s, a, s', r, d, a_\reg)$ is stored in a replay buffer $\mathcal{D}$. Since $\pi_\reg$ is deterministic and not subject to updates, by storing the action term $a_\reg$ in $\mathcal{D}$, the optimization problem in \cref{eqopt} needs to be solved only once for any state $s$, significantly lowering the computational cost.

In \cref{alg:RL-AR}, details of $Q_{\phi_i}$ and $\pi_\theta$ updates (lines 8-9) are omitted as they follow the standard SAC \citep{haarnoja2018soft} paradigm elaborated in \cref{secprel}. After updating $Q_{\phi_i}$ and $\pi_\theta$, the focus module $\beta_\psi$ is updated using samples ${(s, a_\reg)}$ from replay buffer $\mathcal{D}$. As shown in line 10, $\beta_\psi$ is updated using gradient ascent with the gradient:
\begin{align}
\nabla_{\psi} \frac{1}{|\mathcal{B}|}\sum_{(s, a_\reg)\in\mathcal{B}} \min_{i=1,2} Q_{\phi_i}\left(s, \beta_\psi(s) a_\reg+(1-\beta_\psi(s))a_\rl(s)\right), ~~ a_\rl(s)\sim\pi_\theta(s).
\label{eqatt}
\end{align}
Note that the updated $Q_{\phi_i}, i=1,2$, and $\pi_\theta$ are used in \cref{eqatt} to allow quick response to new information. The clipped double-Q learning (taking the minimum $Q_{\phi_i}, i=1,2$) mitigates the overestimation error. Although exploitation level is not explicitly considered in \cref{eqattp}, the use of replay buffer and the gradient-based updates in \cref{eqatt} mean more frequently-visited states with well-estimated Q values will affect $\beta_\psi(s)$ more, whereas rarely-visited states with overestimated Q values affect $\beta_\psi(s)$ less.

\section{Numerical Experiments}
\label{secexp}

\setcounter{footnote}{0}

Here, RL-AR is validated in critical settings described in Section \ref{sec:intro}, where the actual environment model $P$ is unknown, but an estimated environment model $\tilde{f}$ is available (e.g., from previous observations in the system or a similar system)\footnote{Code available at \url{https://github.com/HaozheTian/RL-AR}.}. Four safety-critical environments are implemented:

\begin{itemize}[left=0.1cm]
    \item \textit{Glucose} is the critical medical control problem of regulating blood glucose level against meal-induced disturbances \citep{batmani2017blood}. The observations are denoted as $(G, \dot{G}, t)$, where $G$ is the blood glucose level, $\dot{G}=G_t - G_{t-1}$, and $t$ is the time passed after meal ingestion. The action is insulin injection, denoted as $a_I$. Crossing certain safe boundaries of $G$ can lead to catastrophic health consequences (hyperglycemia or hypoglycemia).
    \item \textit{BiGlucose} is similar to the Glucose environment but capturing more complicated blood glucose dynamics, with 12 internal states (11 unobservable), 2 actions with large delays, and nondifferentiable piecewise dynamics. \citep{kalisvaart2023bi}. The observations are the same as Glucose. The actions are insulin and glucagon injections, denoted as $(a_I, a_N)$.
    \item \textit{CSTR} is a continuous stirred tank reactor for regulating the concentration of a chemical $C_B$ \citep{fiedler2023mpc}. The observations are $(C_A, C_B, T_R, T_K)$, where $C_A$ and $C_B$ are the concentrations of two chemicals; $T_R$ and $T_K$ are the temperatures of the reactor and the cooler, respectively. The actions are the feed and the heat flow, denoted as $(a_F, a_Q)$. Crossing safe boundaries of $C_A$, $C_B$, and $T_R$ can lead to tank failure or even explosions.
    \item \textit{Cart Pole} is a classic control problem of balancing an inverted pole on a cart by applying horizontal force to the cart. The environment is adapted from the gymnasium environment \citep{towers_gymnasium_2023} with continuous action space. The observations are $(x, \dot{x}, \theta, \dot{\theta})$, where $x$ is the position of the cart, $\theta$ is the angle of the pole, $\dot{x}=x_t-x_{t-1}$, and $\dot{\theta}=\theta_t - \theta_{t-1}$. The action is the horizontal force, denoted as $a_f$. The control fails if the cart reaches the end of its rail or the pole falls over.
\end{itemize}

All environments are simulated following widely accepted models and parameters \citep{sherr2022ispad, yang2023optimal}, which are assumed to be unknown to the control algorithm. The estimated models and the actual environments are set to have different model parameters. For \textit{Glucose} and \textit{BiGlucose}, the estimated model parameters are derived from real patient measurements \citep{hovorka2004nonlinear, zahedifar2022control}. The environment models, parameters, and reward functions are detailed in \cref{appenvs}.

The baseline methods used in the experiments are:
i) MPC \citep{fiedler2023mpc}, the primary method for control applications with safety constraints \citep{hewing2020learning};
ii) SAC \citep{haarnoja2018soft}, a model-free RL that disregards safety during training, but achieves state-of-the-art normalized returns;
iii) Residual Policy Learning (RPL) \citep{silver2018residual}, an RL method that improves a sub-optimal MPC policy by directly applying a residual policy action;
iv) Constrained Policy Optimization (CPO) \citep{achiam2017constrained}, a widely-used risk-aware safe RL benchmark based on the trust region method;
and v) SEditor \citep{yu2022towards}, a more recent, state-of-the-art safe RL method that learns a safety editor for transforming potentially unsafe actions.

The proposed method, RL-AR, uses MPC as the safety regularizer agent and SAC as the off-policy RL agent. The two agents in RL-AR each follow their respective baseline implementations. The focus module in RL-AR has a $[128, 32]$ hidden layer size with ReLU activation, and $k$ outputs scaled to $(0, 1)$ by a shifted $\tanh$. Additional detail and hyperparameters of the implementations are provided in \cref{apphyper}. Our RL-AR implementation has an average decision and update time of 0.037 seconds per step on a laptop with a single GPU, meeting real-time control requirements across all environments. In \cref{appadd} we present ablation studies on the benefit of state-dependent focus weight and the choice of SAC as the RL agent.

\subsection{Safety of training}

\begin{wrapfigure}[12]{r}{0.58\textwidth}
\captionof{table}{The mean ($\pm$ standard deviation) number of failures out of the first 100 training episodes, obtained over 5 runs with different random seeds.}
\centering
\vspace{-5pt}
\resizebox{0.55\textwidth}{!}{%
\begin{tabular}{l|cccc}
\hline
Method    & Gluc.           & BiGl.         & CSTR               & Cart.\\ \hline
RL-AR    & \textbf{0.0} \text{\scriptsize{(0.0)}}   & \textbf{0.0} \text{\scriptsize{(0.0)}}   & \textbf{0.0} \text{\scriptsize{(0.0)}}  & \textbf{0.0} \text{\scriptsize{(0.0)}}  \\
MPC       & \textbf{0.0} \text{\scriptsize{(0.0)}}   & \textbf{0.0} \text{\scriptsize{(0.0)}}   & \textbf{0.0} \text{\scriptsize{(0.0)}}  & \textbf{0.0} \text{\scriptsize{(0.0)}}  \\
SAC       & 19.0 \text{\scriptsize{(15.2)}} & 59.4 \text{\scriptsize{(31.1)}} & 99.2 \text{\scriptsize{(0.4)}} & 93.6 \text{\scriptsize{(7.3)}} \\
RPL       & 7.8 \text{\scriptsize{(6.4)}}   & 5.6  \text{\scriptsize{(3.9)}}  & 3.6  \text{\scriptsize{(1.5)}} & 3.6 \text{\scriptsize{(2.2)}} \\
CPO       & 8.0 \text{\scriptsize{(2.1)}}   & 72.4 \text{\scriptsize{(6.7)}}  & 100.0 \text{\scriptsize{(0.0)}}& 21.8 \text{\scriptsize{(3.7)}} \\
SEditor   & 6.8 \text{\scriptsize{(1.7)}}   & 74.6 \text{\scriptsize{(8.4)}}  & 97.2 \text{\scriptsize{(5.6)}} & 17.4 \text{\scriptsize{(10.6)}} \\
\hline
\end{tabular}%
}
\label{tabfailrate}
\end{wrapfigure}

\begin{figure}[t]
\centering
\centerline{\includegraphics[width=\columnwidth]{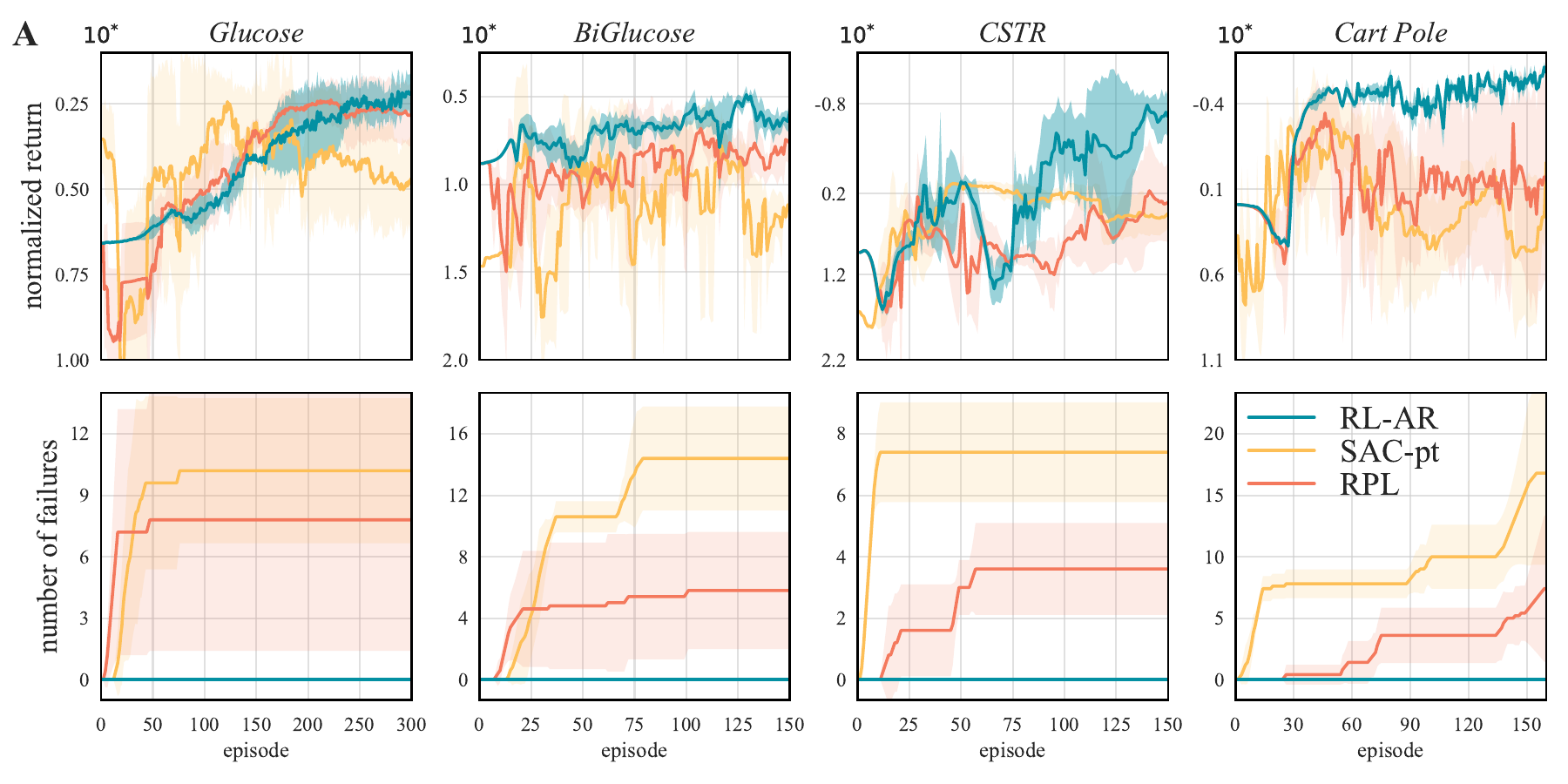}}
\centerline{\includegraphics[width=\columnwidth]{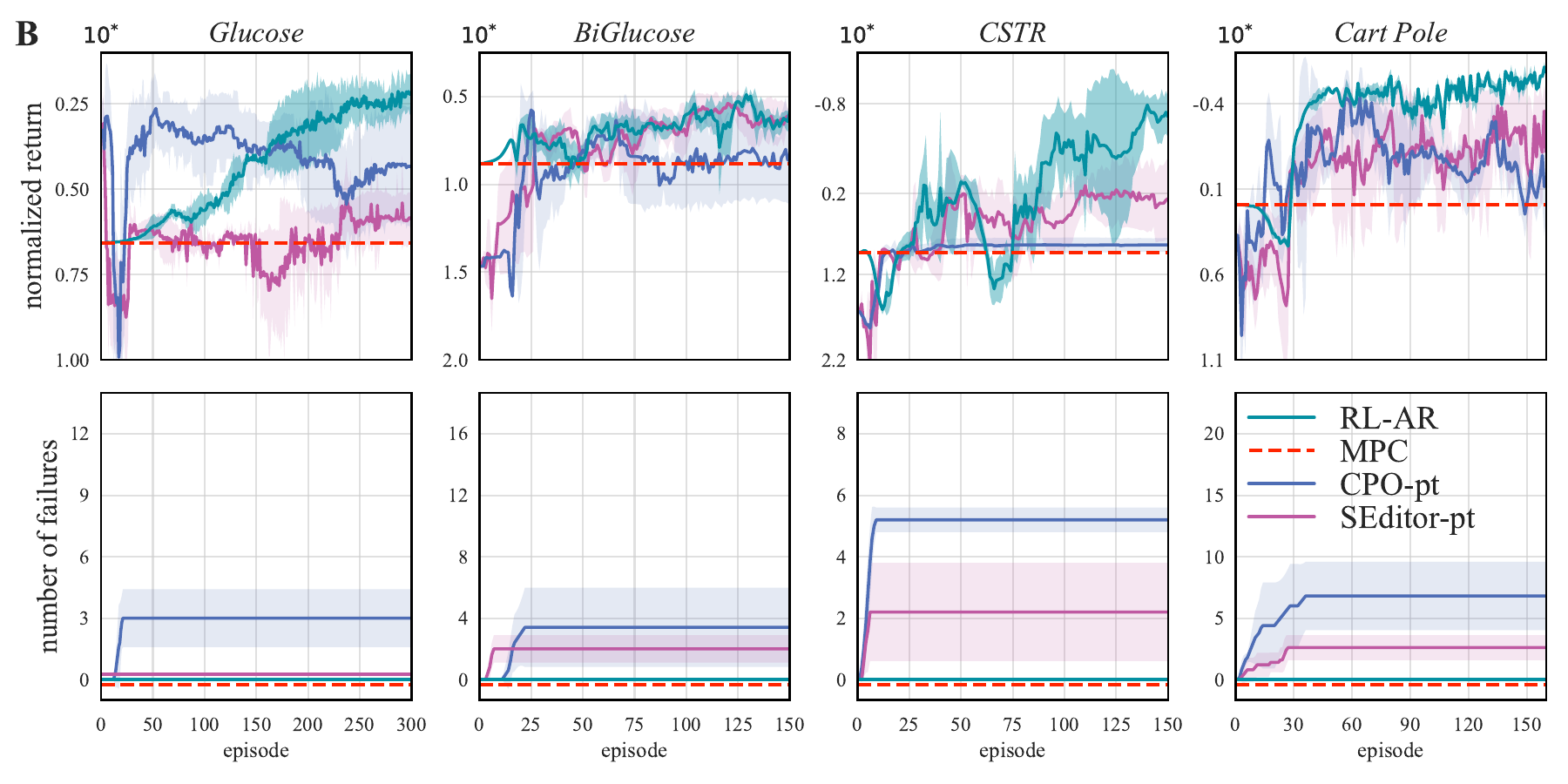}}
\caption{The normalized return curves and the number of failures during training (standard deviations are shown in the shaded areas). SAC, CPO, and SEditor are pretrained using the estimated model $\tilde{f}$ as a simulator (as indicated by \enquote{-pt}) to ensure a fair comparison, given that RL-AR, MPC, and RPL inherently incorporate the estimated model. This pretraining allows SAC, CPO, and SEditor to leverage the estimated model, resulting in more competitive performance in the comparison.}
\label{figTrainCurves}
\vspace{-10pt}
\end{figure}

\begin{figure}[t]
\centering
\centerline{\includegraphics[width=\columnwidth]{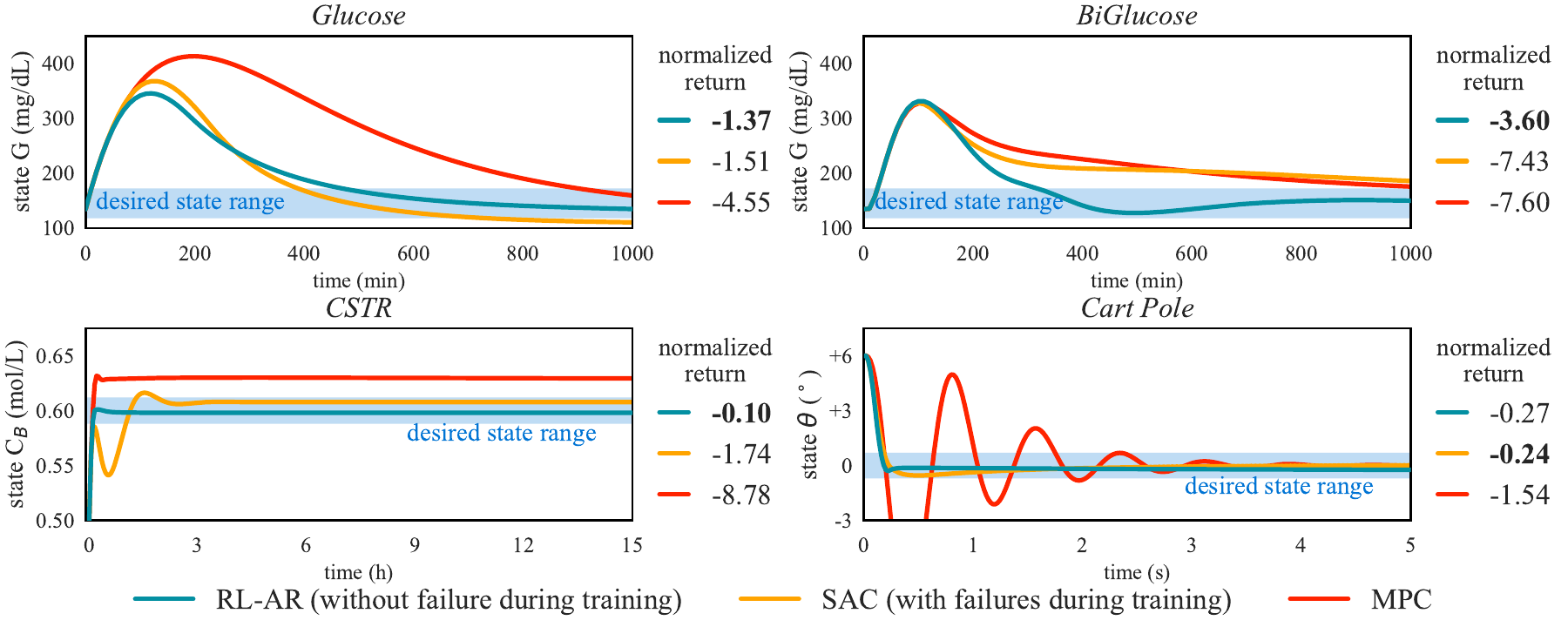}}
\caption{Comparison of the converged trajectories and their corresponding normalized return. In the upper row, the agents try to retain the desired state under time-varying disturbances; in the lower row, the agents try to steer the system to a desired state. Although SAC fails before converging, here we compare with the converged SAC results to show that RL-AR can achieve the performance standard of model-free RL that prioritizes return and disregards safety.}
\label{figConvergeCurves}
\vspace{-10pt}
\end{figure}

We begin by evaluating training safety in the actual environment by counting the number of failed episodes out of the first 100 training episodes. An episode is considered a failure and terminated immediately if a visited state exceeds a predefined safety bound. As shown in \cref{tabfailrate}, only RL-AR completely avoids failure during training in the actual environment. Although MPC does not fail, it does not adapt or update its policy in the actual environment. RPL is relatively safe by relying on a safe initial policy, but its un-regularized residual policy action results in less stable combined action, leading to failures. Due to their model-free nature, CPO and SEditor must observe failures in the actual environment before learning a safe policy, thus failing many times during training. Note that SAC averages the largest number of failures over all environments.

Next, since the estimated environment model $\tilde{f}$ is integrated into RL-AR, MPC, and RPL, for a fair comparison we pretrain the model-free SAC, CPO, and SEditor using $\tilde{f}$ as an environment simulator; this allows all methods to access the estimated model before training on the actual environment. Figure \ref{figTrainCurves} compares the normalized episodic return and the number of failures for different methods over training episodes; the proposed method is compared with SAC and RPL in \cref{figTrainCurves}A, and with MPC, CPO, and SEditor (safety-aware methods) in \cref{figTrainCurves}B. The mean (solid lines) and standard deviation (shaded area) in \cref{figTrainCurves} are obtained from 5 independent runs using different random seeds. Episodes are terminated on failure, resulting in varying episode lengths, thus, the episodic returns are normalized by episode lengths.

Two important insights can be drawn from \cref{figTrainCurves}. First, the normalized return curves show that RL-AR consistently achieves higher returns faster than other methods across all environments. RL-AR begins with a reliable initial policy derived from the safety regularizer and incrementally integrates a learned policy, resulting in stable return improvements (as suggested by \cref{theolemma1} and \cref{theopolicydev}). RL-AR shows a steady return improvement, except for some fluctuations in the CSTR environment which the method quickly recovers from. In contrast, the baseline methods---SAC and RPL, which apply drastic actions based on overestimated returns, or CPO and SEditor, which impose constraints using biased cost estimates derived from simulations using $\tilde{f}$---exhibit significant return degradation and even failures. Second, RL-AR effectively avoids failure during training (see the bottom rows in \cref{figTrainCurves}A\&B). Note that pertaining on $\tilde{f}$ leads to fewer failures in the actual environment for SAC, CPO, and SEditor (compare with the results in \cref{tabfailrate}). However, SAC, CPO, and SEditor continue to fail despite the pretraining (the only exception is SEditor in the Glucose environment), indicating that pretraining on estimated model is not an effective approach to achieve safety.

In CSTR and Cart Pole environments, and only in a limited number of episodes during the early stages of training, the proposed RL-AR policy's normalized return falls below that of the static MPC policy. This can occur due to RL-AR reaching insufficiently learned states (with overestimated Q values). Nevertheless, since $\beta(s)$ is close to $1$ for these insufficiently learned states, the dominance of the safety regularizer agent allows RL-AR to converge to high returns without compromising the safety (as shown in \cref{theopolicydev}).

\subsection{Achieved return after convergence}

Besides ensuring safer training, RL-AR theoretically enables unbiased convergence to the optimal RL policy (as shown in \cref{theothem2}). We validate this by testing whether RL-AR matches the return of SAC. SAC is shown to consistently converge to well-performing control policies, competitive if not better than other state-of-the-art RL algorithms \citep{stable-baselines3, huang2022cleanrl}. In \cref{figConvergeCurves}, we compare the control trajectories of RL-AR, SAC, and MPC and the returns of their converged policies after training; we run the converged policies without stochastic exploration. RL-AR significantly outperforms MPC, achieving faster regulation, reduced oscillation, and smaller steady-state error. Furthermore, in terms of normalized return, RL-AR is competitive with SAC in the Cart Pole environment and outperforms SAC in the other three environments. The results demonstrate that RL-AR not only effectively ensures safety during training, but also finds control policies competitive with the state-of-the-art model-free RL.

\subsection{Sensitivity to parameter discrepancies}

\begin{wrapfigure}[16]{r}{0.5\textwidth}
\centering
\vspace{-10pt}
\includegraphics[width=0.5\textwidth]{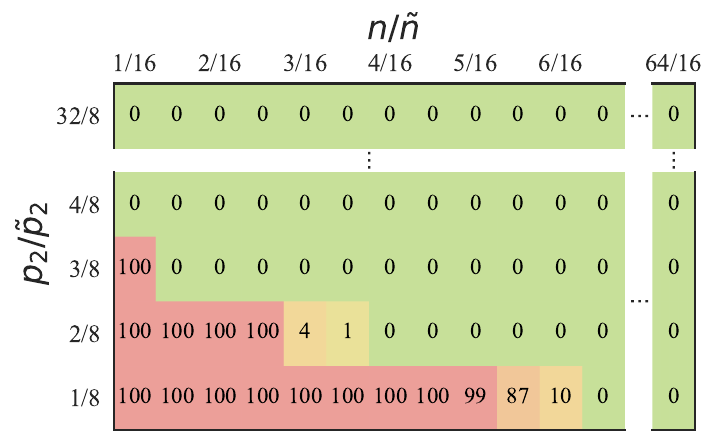}
\caption{Number of failed training episodes out of the first 100 in Glucose environment with different degrees of parameter discrepancy.}
\label{figParaSens}
\end{wrapfigure}

Inherently, RL-AR's training safety relies on the effectiveness of the safety regularizer, which depends on the quality of the estimated model $\tilde{f}$. Thus, a large discrepancy between $\tilde{f}$ and the actual environment might compromise the training safety of RL-AR. We empirically quantify this effect by deploying RL-AR in discrepant Glucose environments created by varying the environment model parameters $n$ and $p_2$ (values chosen based on $\tilde{n}$ and $\tilde{p}_2$ in $\tilde{f}$) to mimic deviating characteristics of new patients, and counting the number of failed episodes out of the first 100 episodes in \cref{figParaSens}. Lower $p_2/\tilde{p}_2$ and $n/\tilde{n}$ makes the environment more susceptible to failure; see \cref{app:glucose_env}. The results show that RL-AR can withstand reasonable discrepancies between $\tilde{f}$ and the actual environment. Failures only occur when the actual environment deviates significantly from $\tilde{f}$ with $p_2\leq\frac{3}{8}\tilde{p}_2$ and $n\leq\frac{6}{16}\tilde{n}$. All failures are caused by the safety regularizer due to its misleading estimated model with largely discrepant parameters. When RL adapts (by updating $\pi_\theta$ and $\beta_\psi$) sufficiently to correct the misleading regularizer action, the combined agents effectively recover from failure. Here in our tests in the Glucose environment, even with large model discrepancies, RL-AR is shown to be as safe as the classic MPC. Appendix \cref{figBetaCurveParaSens} provides insights into the adaptation of the focus module by showing the progression of $\beta_\psi$ in the learning process.

\section{Related Works}

The existing safe RL works can be roughly divided into two categories \citep{garcia2015comprehensive}. The first category does not require knowledge of the system dynamics. These methods often rely on probabilistic approaches \citep{geibel2005risk, liu2022constrained} or constrained MDP \citep{achiam2017constrained, yang2020projection}. More recent methods use learned models to filter out unsafe actions \citep{bharadhwaj2020conservative}. However, these methods need to observe failures to estimate the safety cost, thus do not ensure safety during training. This category of methods does not apply to the single-life setting in this work, i.e., no failure is tolerated in the actual environment. Nevertheless, in \cref{figTrainCurves} we evaluate pertaining CPO \citep{achiam2017constrained} and SEditor \citep{yu2022towards} using simulation with the estimated model to obtain risk estimation before training in the actual environment.

The second category relies on an estimated model of the system dynamics. Some methods enforce safety constraints using Control Barrier Function (CBF) \citep{cheng2019end}. However, CBF minimizes the control effort without directly optimizing the system performance. In contrast, the MPC regularizer used in RL-AR enforces safety constraints while optimizing the predicted performance, resulting in high performance during training. Some methods compute a model-based projection to verify the safety of actions \citep{bastani2021safe, kochdumper2023provably, fulton2018safe}. However, the scalability of verification-based methods for complex control applications is an issue. \citet{anderson2020neurosymbolic} propose using neurosymbolic representations to reduce verification complexity, but the computational cost remains to be high. On the other hand, the average time for RL-AR to take a step (including the environment interaction and network updates) in the four environments in \cref{secexp} is 0.037 s, which is practical for real-time control.

\citet{gros2019data} and \citet{zanon2020reinforcement} use MPC as the policy generator and use RL to dynamically tune the MPC parameters in the cost functions and the estimated environment model. Assuming discrepancies between the estimated model parameters and the actual environment parameters, the tuning increases the MPC's performance. However, this is a strong assumption since there are other discrepancies, such as neglected dynamics and discretization errors. However, the RL-AR proposed in this work can theoretically converge to the optimal policy by utilizing the model-free RL agent.

It is important to note that although the MPC regularizer accelerates the learning of the RL agent, RL-AR is not a special case of transferring a learned policy. The MPC regularizer used in our proposed algorithm forecasts the system behavior and hard-codes safety constraints in the optimization. The main role of the MPC is to keep the RL-AR actions safe in the actual environment---not transferring knowledge. Transfer learning in RL studies the effective reuse of knowledge, especially across different tasks \citep{taylor2009transfer, glatt2020decaf}. By reusing prior knowledge, transferred RL agents skip the initial random trial-and-error and drastically increase sampling efficiency \citep{karimpanal2020learning, da2019survey}. However, transferred RL agents are not inherently risk-aware, and thus can still steer the actual environment into unsafe states. For this reason, transferred RL is not generally considered effective for ensuring safety. 

\section{Conclusion and Future Works}
\label{secconc}

Controlling critical systems, where unsafe control actions can have catastrophic consequences, has significant applications in various disciplines from engineering to medicine. Here, we demonstrate that the appropriate combination of a control regularizer can facilitate safe RL. The proposed method, RL-AR, learns a focus module that relies on the safe control regularizer for less-exploited states and simultaneously allows unbiased convergence for well-exploited states. Numerical experiments in critical applications revealed that RL-AR is safe during training, given a control regularizer with reasonable safety performance. Furthermore, RL-AR effectively learns from interactions and converges to the performance standard of model-free RL that disregards safety.

One limitation of our setting is the assumption that the estimated model has reasonable accuracy for deriving a viable control regularizer. Although this assumption is common in the control and safe RL literature, one possible direction for future work is to design more robust algorithms against inaccurate estimated models of the actual environment. A potential approach is to update the estimated model using observed transitions in the actual environment. However, the practical challenge is to adequately adjust all model parameters even with a small number of transitions observed in the actual environment. In addition, for such an approach, managing controllability, convergence, and safety requires careful design and tuning.

\begin{ack}
This project is partly supported by UK Research and Innovation (UKRI) under the UK government’s Horizon Europe funding guarantee [grant number 101084642].
\end{ack}

{
\small
\bibliographystyle{plainnat}
\bibliography{ref.bib}
}

\newpage
\appendix
\renewcommand{\thesubsection}{\Alph{subsection}}
\setcounter{lemma}{0}
\setcounter{theorem}{0}
\allowdisplaybreaks

\newcommand{\appendixTitle}{%
\vbox{
    \centering
	\hrule height 4pt
	\vskip 0.2in
	{\LARGE \bf Appendix}
	\vskip 0.2in
	\hrule height 1pt 
}}

\appendixTitle

\section*{Table of Content}

\hypersetup{
    linkcolor=blue
}
\begin{itemize}
    \item[\ref{appproofs}] \nameref{appproofs} \textcolor{blue}{\dotfill} \pageref{appproofs}
    \item[\ref{appenvs}] \nameref{appenvs} \textcolor{blue}{\dotfill} \pageref{appenvs}
    \item[\ref{apphyper}] \nameref{apphyper} \textcolor{blue}{\dotfill} \pageref{apphyper}
    \item[\ref{appadd}] \nameref{appadd} \textcolor{blue}{\dotfill} \pageref{appadd}
\end{itemize}
\hypersetup{
    linkcolor=black
}

\subsection{Theoretical analysis}
\label{appproofs}

\subsubsection{Policy combination as regularization}
\label{appl1}

\begin{lemma}
(Policy Regularization) In any state $s\in\mathcal{S}$, for a multivariate Gaussian RL policy $\pi_\rl$ with mean $\bar{\pi}_\rl(s)$ and covariance matrix $\Sigma = \mathrm{diag}(\sigma_1^2(s), \sigma_2^2(s), \dots, \sigma_k^2(s)) \in \mathbb{R}^{k\times k}$, the expectation of the combined action $a_\beta(s)$ derived from \cref{eqmixp} is the solution to the following regularized optimization with regularization parameter $\lambda={\beta(s)}/{\left(1-\beta(s)\right)}$:
\begin{equation}
    \mathbb{E}\left[a_\beta(s)\right] = \argmin_{a} \left\|a-\bar{\pi}_\rl(s)\right\|_\Sigma + \frac{\beta(s)}{1-\beta(s)} \left\|a-a_\reg(s)\right\|_\Sigma.
\end{equation}
\end{lemma}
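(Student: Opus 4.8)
The plan is to verify the claimed identity by computing both sides explicitly and checking that they coincide. The left-hand side is immediate: since $a_\reg(s)$ is deterministic and $a_\rl(s)\sim\pi_\rl(s)=\mathcal{N}(\bar{\pi}_\rl(s),\Sigma)$, linearity of expectation gives $\mathbb{E}[a_\beta(s)]=\beta(s)a_\reg(s)+(1-\beta(s))\bar{\pi}_\rl(s)$. So the real content is to show that this particular convex combination is the unique minimizer of the regularized objective on the right.

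First I would fix the reading of $\|\cdot\|_\Sigma$ as the (squared) quadratic form associated with the Gaussian, i.e. $\|x\|_\Sigma = x^\top\Sigma^{-1}x$, so that $\|a-\bar{\pi}_\rl(s)\|_\Sigma$ equals twice the negative log-density of $\pi_\rl$ at $a$ up to an additive constant. With this reading the objective $F(a)=\|a-\bar{\pi}_\rl(s)\|_\Sigma+\lambda\|a-a_\reg(s)\|_\Sigma$ is a sum of two strictly convex quadratics, hence strictly convex and coercive, so it admits a unique global minimizer characterized by $\nabla F(a)=0$.

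The remaining computation is routine. Setting $\nabla F(a)=2\Sigma^{-1}(a-\bar{\pi}_\rl(s))+2\lambda\Sigma^{-1}(a-a_\reg(s))=0$ and left-multiplying by $\Sigma$ (invertible, being positive-definite diagonal) removes the metric entirely and yields $(1+\lambda)a=\bar{\pi}_\rl(s)+\lambda a_\reg(s)$. Substituting $\lambda=\beta(s)/(1-\beta(s))$ gives $1+\lambda=1/(1-\beta(s))$, so the minimizer is $a^\star=(1-\beta(s))\bar{\pi}_\rl(s)+\beta(s)a_\reg(s)$, which is exactly $\mathbb{E}[a_\beta(s)]$ from the first step, closing the argument.

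The one genuine subtlety—the step I would be careful about—is the interpretation of $\|\cdot\|_\Sigma$. The identity requires it to be the squared quadratic form, not an honest norm: if one instead minimized the sum of un-squared norms $\|a-\bar{\pi}_\rl(s)\|+\lambda\|a-a_\reg(s)\|$, the objective restricted to the segment joining the two anchors is affine in the segment parameter, and the minimizer collapses to one of the two anchor points rather than a proper interior convex combination. Once the squared convention is fixed the proof is a one-line stationarity calculation, and it is worth noting that $\Sigma$ cancels from the stationarity condition, so the identity in fact holds for any positive-definite weight; the specific choice of $\Sigma$ is only what aligns the regularization interpretation with the Gaussian log-likelihood. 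Because the argument uses nothing beyond convexity and the first-order condition, it transfers verbatim to the function-approximation setting, as the paper later remarks.
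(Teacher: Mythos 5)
Your proof is correct, but it takes a genuinely different and more elementary route than the paper's. You compute $\mathbb{E}[a_\beta(s)]=\beta(s)a_\reg(s)+(1-\beta(s))\bar{\pi}_\rl(s)$ directly by linearity and then verify that this point is the unique minimizer of the strictly convex objective via the first-order condition $\Sigma^{-1}(a-\bar{\pi}_\rl(s))+\lambda\Sigma^{-1}(a-a_\reg(s))=0$. The paper instead observes that $a_\beta(s)$ is Gaussian with mean $\beta a_\reg(s)+(1-\beta)\bar{\pi}_\rl(s)$ and covariance $(1-\beta)^2\Sigma$, factors its density as a product of two Gaussian densities centered at $a_\reg(s)$ and $\bar{\pi}_\rl(s)$ with covariances $\frac{1}{\beta}\Sigma$ and $\frac{1}{1-\beta}\Sigma$, and identifies the mean with the mode of that product, i.e., the minimizer of the sum of the two quadratic forms. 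Your stationarity calculation is shorter and exposes a fact the paper's derivation hides: $\Sigma$ cancels from the first-order condition, so the identity holds for any positive-definite weight matrix, and the specific choice of $\Sigma$ matters only for the interpretation of the first term as the Gaussian log-likelihood. The paper's factorization, in exchange, gives a distribution-level reading (the combined policy's density is proportional to the RL policy density times a Gaussian penalty centered at the regularizer action), which is closer in spirit to the MAP-regularization framing inherited from the cited prior work. Your reading of $\|x\|_\Sigma$ as the squared quadratic form $x^\top\Sigma^{-1}x$ is exactly the convention the paper adopts in its own proof, and your caveat that the un-squared version would collapse the minimizer onto one of the two anchors is a correct and worthwhile observation.
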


\begin{proof}
    For state $s \in \mathcal{S}$, the focus module outputs a fixed $\beta = \beta(s)$. For the fixed $\beta$, the proof of \cref{theolemma1} is similar to the proof by \citet{cheng2019control}. Since the RL policy is a Gaussian distributed policy $\mathcal{N}(\bar{\pi}_\rl(s), \Sigma)$ with the mean action $\bar{\pi}_\rl(s)$ and an exploration noise with covariance $\Sigma = \mathrm{diag}(\sigma_1^2, \sigma_2^2, \dots)$, the combined action $a_\beta(s)$ also follows a Gaussian distribution: 
    \begin{equation}
        a_\beta(s)\sim\mathcal{N}\left(\beta a_\reg(s) + (1-\beta) \bar{\pi}_\rl(s), (1-\beta)^2\Sigma\right).
    \end{equation}
    Let $f(\mu, \Sigma)$ be the probability density function (PDF) of $\mathcal{N}(\mu, \Sigma)$. The product of two multivariate Gaussian PDFs is proportional to another multivariate Gaussian PDF with the following mean and covariance:
    \begin{align}
    \begin{split}
        f\left(\mu_1, \Sigma_1\right) \cdot        f\left(\mu_2, \Sigma_2\right)
        = c f\left(( \Sigma_1^{-1} + \Sigma_2^{-1} )^{-1}(\Sigma_1^{-1} \mu_1 + \Sigma_2^{-1} \mu_2), ( \Sigma_1^{-1} + \Sigma_2^{-1} )^{-1}\right).
    \end{split}
    \label{eqpdfr}
    \end{align}
    
    The mean of $a_\beta(s)$, $\beta a_\reg(s) + (1-\beta) \bar{\pi}_\rl(s)$, can be expressed in the following form:
    \begin{align}
    \begin{split}
        \beta a_\reg(s) + (1-\beta) \bar{\pi}_\rl(s) &= \beta \Sigma^{-1} \Sigma a_\reg(s) + (1-\beta)\Sigma^{-1} \Sigma \bar{\pi}_\rl(s)\\
        &= \Sigma\left(\left(\frac{1}{\beta}\Sigma\right)^{-1}a_\reg(s)+\left(\frac{1}{1-\beta}\Sigma\right)^{-1}\bar{\pi}_\rl(s)\right).
    \end{split}
    \end{align}
    The covariance matrix $\Sigma$ can be expanded into the following form:
    \begin{align}
        \Sigma = \left(\left(\frac{1}{\beta}\Sigma\right)^{-1} + \left(\frac{1}{1-\beta}\Sigma\right)^{-1}\right)^{-1}.
    \end{align}

    Using \cref{eqpdfr}, the PDF of $a_\beta(s)$ can be expressed as the multiplication of two PDFs, as shown below:
    \begin{align}
    \begin{split}
        f(\beta a_\reg(s) + (1-\beta) \bar{\pi}_\rl(s), (1-\beta)^2\Sigma) = c_0 f\left(a_\reg(s), \frac{1}{\beta}\Sigma\right)\cdot
        f\left(\bar{\pi}_\rl(s), \frac{1}{1-\beta}\Sigma\right).
    \end{split}
    \end{align}
    With the definition $\|x\|_{\Sigma}= x^T\Sigma^{-1}x$, the PDF of $a_\beta(s)$ can be written as:
    \begin{align}
    \begin{split}
        f(a_\beta(s)) =& c_0 c_1\exp\left(-\frac{\beta}{2}\|a - a_\reg(s)\|_\Sigma\right) \times c_2\exp\left(-\frac{1-\beta}{2}\|a - \bar{\pi}_\rl(s)\|_\Sigma\right) \\
        =& c \exp\left(\frac{1-\beta}{2} \left( 
        -\|a - \bar{\pi}_\rl(s)\|_\Sigma
        -\frac{\beta}{1-\beta}\|a - a_\reg(s)\|_\Sigma
        \right)\right),
    \end{split}
    \end{align}
    where the constant $c$ is as follows:
    \begin{align}
    \begin{split}
        c =& \frac{c_0}{(2\pi)^{k}\beta^{k/2}(1-\beta)^{k/2} |\Sigma|}.
    \end{split}
    \end{align}
    Since $\beta$ is in the range $(0, 1)$, $f(a_\beta(s))$ monotonically decreases as $\|a - \bar{\pi}_\rl(s)\|_\Sigma+\frac{\beta}{1-\beta}\|a - a_\reg(s)\|_\Sigma$ increases. Therefore, the probability of $\pi(s)$ is maximized when the term is minimized, leading to the following optimization problem:
    \begin{equation}
    \mathbb{E}\left[a_\beta(s)\right] = \argmin_{a} \left\|a-\bar{\pi}_\rl(s)\right\|_\Sigma + \frac{\beta(s)}{1-\beta(s)} \left\|a-a_\reg(s)\right\|_\Sigma.
    \label{eqregu}
    \end{equation}
    Assuming the RL policy $\pi_\rl$ converges to $\argmax_{\pi}Q(s, \pi(s))$, \cref{eqregu} can be written as:
    \begin{equation}
    \mathbb{E}\left[a_\beta(s)\right] = \argmin_{a} \left\|a-\mathbb{E}\left[\argmax_{\pi}Q(s, \pi(s))\right]\right\|_\Sigma + \frac{\beta(s)}{1-\beta(s)} \left\|a-a_\reg(s)\right\|_\Sigma.
    \end{equation}
\end{proof}

\subsubsection{Deviation of combined policy from safety regularizer}
\label{apptd}

\begin{theorem}
Assume the reward $R$ and the transition probability $P$ of the MDP $\mathcal{M}$ are Lipshitz continuous over $\mathcal{A}$ with Lipschitz constants $L_R$ and $L_P$. For any state $s \in \mathcal{S}$, the difference in expected return between following the combined policy $\pi_\beta$ and following the safety regularizer policy $\pi_\reg$, i.e., $|V^{\pi_\beta}(s) - V^{\pi_\reg}(s)|$, has the upper-bound:
\begin{align}
\begin{split}
    |V^{\pi_\beta}(s) - V^{\pi_\reg}(s)| \leq \frac{(1-\gamma)|\mathcal{S}|L_R + \gamma |\mathcal{S}| L_P R_\text{max}}{(1-\gamma)^2} (1-\beta(s))\Delta a,
\end{split}
\end{align}
where $|\mathcal{S}|$ is the cardinality of $\mathcal{S}$, and $\Delta a=|a_\rl(s)-a_\reg(s)|$ is the bounded action difference at $s$.
\end{theorem}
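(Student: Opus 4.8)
The plan is to control the value gap by unrolling the Bellman recursion and, at each step, isolating the two mechanisms through which $\pi_\beta$ and $\pi_\reg$ diverge: the immediate reward they collect and the next-state distribution they induce. Two preliminary facts set up the argument. First, \cref{eqmixp} gives the exact per-state action gap $a_\beta(s)-a_\reg(s)=(1-\beta(s))(a_\rl(s)-a_\reg(s))$, so $|a_\beta(s)-a_\reg(s)|=(1-\beta(s))\Delta a$, which is finite because $\mathcal{A}$ is bounded. Second, since $r\in[-R_\text{max},R_\text{max}]$ and $\gamma\in(0,1)$, every value function obeys $|V^\pi(s)|\le R_\text{max}/(1-\gamma)$ by the geometric series in \cref{eqrlprinciple}. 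Because $\pi_\rl$ (and hence $\pi_\beta$) is stochastic, every action below should be read inside an expectation over the policy; since the Lipschitz bounds are linear in the action, Jensen lets me pull that expectation outside absolute values, so $\Delta a$ may be treated as the (expectation-level) action difference.

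Next I would write \cref{eqbellman} for $V^{\pi_\beta}$ and $V^{\pi_\reg}$ at $s$, subtract, and add and subtract $\sum_{s'}P(s'\mid s,a_\beta)V^{\pi_\reg}(s')$ to split the gap into three pieces: the reward gap $r(s,a_\beta)-r(s,a_\reg)$; a recursive term $\gamma\sum_{s'}P(s'\mid s,a_\beta)(V^{\pi_\beta}(s')-V^{\pi_\reg}(s'))$; and a transition-mismatch term $\gamma\sum_{s'}(P(s'\mid s,a_\beta)-P(s'\mid s,a_\reg))V^{\pi_\reg}(s')$. Lipschitz continuity of $r$ in the action bounds the first piece by $L_R(1-\beta(s))\Delta a$. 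For the third piece I would factor out $|V^{\pi_\reg}(s')|\le R_\text{max}/(1-\gamma)$ and then sum the $|\mathcal{S}|$ coordinate-wise transition gaps, each at most $L_P(1-\beta(s))\Delta a$ by Lipschitz continuity of $P$; this step is the origin of the $|\mathcal{S}|$, $L_P$, and $R_\text{max}$ factors, and of one of the two $1/(1-\gamma)$ factors.

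The second piece drives the recursion, and handling it is the main obstacle rather than any individual inequality. Taking a supremum over states (equivalently, the $\infty$-norm of the value-gap vector) and using $\sum_{s'}P(s'\mid s,a_\beta)=1$, the recursive term contributes at most $\gamma\,\|V^{\pi_\beta}-V^{\pi_\reg}\|_\infty$, so the three bounds combine into $\|V^{\pi_\beta}-V^{\pi_\reg}\|_\infty\le (L_R+\gamma|\mathcal{S}|L_P R_\text{max}/(1-\gamma))(1-\beta(s))\Delta a+\gamma\,\|V^{\pi_\beta}-V^{\pi_\reg}\|_\infty$; dividing by $1-\gamma$ yields a coefficient of the claimed shape, and the two geometric accumulations — one feeding $R_\text{max}/(1-\gamma)$ into the transition term, one from solving this fixed point — produce the $(1-\gamma)^2$ denominator. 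The delicate point is that the two policies differ at \emph{every} state, so the per-step gaps $(1-\beta(s'))\Delta a$ appear throughout the unrolled sum; collapsing them into the single factor $(1-\beta(s))\Delta a$ of the statement requires a uniform, worst-case-over-states reading rather than a state-by-state argument, and one must check that taking the supremum preserves the Bellman contraction supplying the $1/(1-\gamma)$. (The $|\mathcal{S}|$ multiplying $L_R$ in the statement is at worst a loose over-bound, since $|\mathcal{S}|\ge 1$, and leaves the inequality valid.) Once the reward and transition gaps are written as linear-in-action quantities, the remainder is standard simulation-lemma bookkeeping.
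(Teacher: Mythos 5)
Your proposal is correct and follows essentially the same route as the paper: the same simulation-lemma decomposition into a reward gap plus a transition-mismatch term weighted by $\|\mathbf{v}_\reg\|_\infty \le R_\text{max}/(1-\gamma)$, with the recursion closed by a $1/(1-\gamma)$ contraction --- your fixed-point argument is just the scalar form of the paper's $(\mathbf{I}-\gamma\mathbf{P}_\beta)^{-1}$ Neumann-series bound. The two points where you go beyond the paper are both sound: your reward term carries $L_R$ rather than $|\mathcal{S}|L_R$ (a strictly tighter constant that still implies the stated bound since $|\mathcal{S}|\ge 1$), and you explicitly flag that collapsing the per-state gaps $(1-\beta(s'))\Delta a(s')$ into the single factor $(1-\beta(s))\Delta a$ requires a worst-case-over-states reading --- a step the paper's proof also takes, but silently, when it writes $\|\mathbf{r}_\beta-\mathbf{r}_\reg\|_1 = |\mathcal{S}|\,|R(s,a_\beta(s))-R(s,a_\reg(s))|$.
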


\begin{proof}
Let us define value function vectors $\mathbf{v}_\beta$ and $\mathbf{v}_\reg$ in $\mathbb{R}^{|\mathcal{S}|\times 1}$, for which the $s$-th entries are $\left[\mathbf{v}_\beta\right]_s = V^{\pi_\beta}(s)$ and $\left[\mathbf{v}_\reg\right]_s = V^{\pi_\reg}(s)$. Also, let us by  $\mathbf{r}_\beta$ and $\mathbf{r}_\reg$ denote reward vectors in $\mathbb{R}^{|\mathcal{S}|\times 1}$, with the $s$-th entries of the reward vectors being $\left[\mathbf{r}_\beta\right]_s = r(s, a_\beta(s))$ and $\left[\mathbf{r}_\reg\right]_s = r(s, a_\reg(s))$. We define state-transition matrices $\mathbf{P}_\beta$ and $\mathbf{P}_\reg$ in $\mathbb{R}^{|\mathcal{S}|\times|\mathcal{S}|}$, with $(s,s')$-th entries as $\left[\mathbf{P}_\beta\right]_{s, s'} = P(s, a_\beta(s))$ and $\left[\mathbf{P}_\reg\right]_{s, s'} = P(s, a_\reg(s))$. According to the vectorized Bellman equation, the difference between $\mathbf{v}_\beta$ and $\mathbf{v}_\reg$ satisfies the following relationship:
\begin{align}
\begin{split}
\mathbf{v}_\beta - \mathbf{v}_\reg &= \mathbf{r}_\beta + \gamma \mathbf{P}_\beta \mathbf{v}_\beta - \mathbf{r}_\reg - \gamma \mathbf{P}_\reg \mathbf{v}_\reg    \\
        &= \mathbf{r}_\beta - \mathbf{r}_\reg + \gamma \mathbf{P}_\beta \mathbf{v}_\beta - \gamma \mathbf{P}_\beta \mathbf{v}_\reg + \gamma \mathbf{P}_\beta \mathbf{v}_\reg - \gamma \mathbf{P}_\reg \mathbf{v}_\reg \\
        &= \mathbf{r}_\beta - \mathbf{r}_\reg + \gamma \mathbf{P}_\beta (\mathbf{v}_\beta - \mathbf{v}_\reg) + \gamma (\mathbf{P}_\beta - \mathbf{P}_\reg) \mathbf{v}_\reg \\
        &= (\mathbf{I} - \gamma \mathbf{P}_\beta)^{-1}(\mathbf{r}_\beta - \mathbf{r}_\reg + \gamma (\mathbf{P}_\beta - \mathbf{P}_\reg) \mathbf{v}_\reg)
\end{split}
\label{eqvecbellman}
\end{align}

Let $\mathbf{d}_{\beta, s}^T$ be the $s$-th row of $(\mathbf{I}-\gamma\mathbf{P}_\beta)^{-1}$ and $\mathbf{d}_{\reg, s}^T$ be the $s$-th row of $(\mathbf{I}-\gamma\mathbf{P}_\reg)^{-1}$. Since $(\mathbf{I}-\gamma\mathbf{P}_\beta)^{-1}$ can be expanded as a Neumann series $I + \gamma P_\beta+\gamma^2 P_\beta^2 \cdots$, the upper-bound for the elements of $\mathbf{d}_{\beta, s}$ and $\mathbf{d}_{\reg, s}$ is $1/(1-\gamma)$, i.e., $\|\mathbf{d}_{\beta, s}\|_\infty$ and $\|\mathbf{d}_{\reg, s}\|_\infty$ are less than or equal to $1/(1-\gamma)$. From \cref{eqvecbellman}, the value functions $V^{\pi_\beta}(s)$ and $V^{\pi_\reg}(s)$ for a specific state $s \in \mathcal{S}$ satisfies:
\begin{align}
\begin{split}
|V^{\pi_\beta}(s) - V^{\pi_\reg}(s)| \leq \underbrace{|\mathbf{d}_{\beta, s}^T (\mathbf{r}_\beta - \mathbf{r}_\reg)|}_{(a)} + \underbrace{\gamma| \mathbf{d}_{\reg, s}^T (\mathbf{P}_\beta - \mathbf{P}_\reg)\mathbf{v}_\reg|}_{(b)},
\end{split}
\label{eqtwopart}
\end{align}

First, we consider part (\textit{a}) of \cref{eqtwopart}. Assuming $R(s,a)$ is Lipschitz continuous over $\mathcal{A}$ with Lipschitz constant $L_R$, we have:
\begin{align}
\begin{split}
\|\mathbf{r}_\beta - \mathbf{r}_\reg\|_1 &= |\mathcal{S}||R(s, a_\beta(s)) - R(s,a_\reg(s_t))|\\
\eqill{Lipschitz}&\leq |\mathcal{S}| L_R (1-\beta(s)) |a_\rl(s)-a_\reg(s)| \\
&\leq |\mathcal{S}| L_R  (1-\beta(s))\Delta a
\end{split}
\end{align}
Using the Holder's inequality, part (\textit{a}) of \cref{eqtwopart} has the following upper-bound:
\begin{align}
\begin{split}
 |\mathbf{d}_{\beta, s}^T (\mathbf{r}_\beta - \mathbf{r}_\reg)| \leq \|\mathbf{d}_{\beta, s}\|_\infty \|\mathbf{r}_\beta - \mathbf{r}_\reg\|_1 \leq \frac{|\mathcal{S}| L_R }{1-\gamma} (1-\beta(s))\Delta a
\end{split}
\label{eqptt1}
\end{align}

Second, we consider part (\textit{b}) of \cref{eqtwopart}. For each state $s\in \mathcal{S}$, define the $s$-th rows of $\mathbf{P}_\beta$ and $\mathbf{P}_\reg$ as $\mathbf{p}_{\beta, s}^T$ and $\mathbf{p}_{\reg, s}^T$. The vectors $\mathbf{p}_{\beta, s}$ and $\mathbf{p}_{\reg, s}$ each represents a discrete probability distribution. Because we assume $P(s,a)$ is Lipschitz continuous over $\mathcal{A}$ with Lipschitz constant $L_P$, the upper-bound on each item in vector $(\mathbf{P}_\beta - \mathbf{P}_\reg)\mathbf{v}_\reg$ is derived below:
\begin{align}
\begin{split}
|(\mathbf{p}_{\beta, s} - \mathbf{p}_{\reg, s})^T \mathbf{v}_\reg| &\leq \|\mathbf{p}_{\beta, s} - \mathbf{p}_{\reg, s}\|_1\|\mathbf{v}_\reg\|_\infty \\
&\leq \frac{ R_\text{max}}{1-\gamma} \|P(s, a_\beta(s)) - P(s, a_\reg(s))\|_1\\
\eqill{Lipschitz}&\leq \frac{ L_P R_\text{max}}{1-\gamma} (1-\beta(s)) |a_\rl(s)-a_\reg(s)|\\
&\leq \frac{ L_P R_\text{max} }{1-\gamma} (1-\beta(s)) \Delta a
\end{split}
\end{align}
Therefore, part (\textit{b}) of \cref{eqtwopart} has the following upper-bound:
\begin{align}
\begin{split}
\gamma| \mathbf{d}_{\reg, s}^T (\mathbf{P}_\beta - \mathbf{P}_\reg)\mathbf{v}_\reg| \leq \|\mathbf{d}_{\reg, s}\|_\infty \|(\mathbf{P}_\beta - \mathbf{P}_\reg)\mathbf{v}_\reg\|_1 \leq \frac{\gamma |\mathcal{S}| L_P R_\text{max}}{(1-\gamma)^2} (1-\beta(s)) \Delta a
\end{split}
\label{eqptt2}
\end{align}
The proof is complete by substituting \cref{eqptt1} and \cref{eqptt2}, respectively, into parts (\textit{a}) and (\textit{b}) of  \cref{eqtwopart}.
\end{proof}

\subsubsection{Performance improvement of focus module update}
\label{appt2}

\begin{theorem}
(Focus Module Performance Improvement)
The focus weight $\beta'(s)$ updated by \cref{eqattp} satisfies $V^{\pi_{\beta'}}(s) \geq V^{\pi_{\beta}}(s), \forall s\in\mathcal{S}$, i.e., the expected return monotonically improves.
\label{theothem2}
\end{theorem}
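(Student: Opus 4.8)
The plan is to cast this as an instance of the classical policy improvement theorem, using the shorthand $Q^{\pi_\beta}(s,\pi)$ for $\mathbb{E}_{a\sim\pi(s)}[Q^{\pi_\beta}(s,a)]$ introduced in the main text. The argument has two ingredients: a one-step non-negative advantage guaranteed by the update rule, and a telescoping of that one-step gain into the full discounted return.

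First I would establish the non-negative advantage condition $Q^{\pi_\beta}(s,\pi_{\beta'}) \geq V^{\pi_\beta}(s)$ for every $s \in \mathcal{S}$. This is immediate from the definition of $\beta'$ in \cref{eqattp}: the maximization is taken over $\beta \in [0,1]$, and the incumbent weight $\beta(s)$ is itself a feasible point of this set. Hence the maximizing value can only be at least as large as the value attained at $\beta(s)$, i.e. $\mathbb{E}[Q^{\pi_\beta}(s,\beta'(s)a_\reg(s)+(1-\beta'(s))a_\rl(s))] \geq \mathbb{E}[Q^{\pi_\beta}(s,\beta(s)a_\reg(s)+(1-\beta(s))a_\rl(s))] = V^{\pi_\beta}(s)$, which is exactly $Q^{\pi_\beta}(s,\pi_{\beta'}) \geq V^{\pi_\beta}(s)$. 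The same conclusion holds whether $\beta$ is updated at a single state or at all states simultaneously, since at any state left unchanged the advantage is exactly zero.

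Second, I would propagate this one-step improvement to the full return by repeatedly applying the Bellman equation \cref{eqbellman}. Starting from $V^{\pi_\beta}(s) \leq Q^{\pi_\beta}(s,\pi_{\beta'}) = \mathbb{E}_{a\sim\pi_{\beta'}(s),\,s'}[r(s,a)+\gamma V^{\pi_\beta}(s')]$, I substitute the just-established inequality $V^{\pi_\beta}(s')\leq Q^{\pi_\beta}(s',\pi_{\beta'})$ for the successor state $s'$ inside the expectation, and iterate. Each substitution peels off one more reward term evaluated under $\pi_{\beta'}$ while pushing the residual $V^{\pi_\beta}$ term one step further into the discounted future. Because $\gamma\in(0,1)$, the residual term vanishes in the limit and the telescoped sum converges to $\mathbb{E}_{\pi_{\beta'}}[\sum_{i\geq 0}\gamma^i r(s_{t+i},a_{t+i})] = V^{\pi_{\beta'}}(s)$, yielding $V^{\pi_\beta}(s)\leq V^{\pi_{\beta'}}(s)$ as claimed.

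The conceptual content is standard, so the point requiring the most care --- the main obstacle --- is that the telescoping step is valid only if the one-step advantage is non-negative at \emph{every} state reachable from $s$ under $\pi_{\beta'}$, not merely at $s$ itself. This is precisely why I establish the advantage inequality uniformly over all of $\mathcal{S}$ in the first step rather than pointwise: the expectation over the successor $s'$ in the Bellman recursion would otherwise be sign-indefinite and the propagation would break. Uniform non-negativity is guaranteed here because the $\argmax$ in \cref{eqattp} is performed state-by-state over a feasible set that always contains the incumbent weight $\beta(s)$, so no reachable state can exhibit a negative advantage. I would also flag that this clean telescoping relies on the tabular setting stated in the theorem, where $\pi_\beta$ and $\pi_{\beta'}$ share the same $Q^{\pi_\beta}$ and the Bellman operator is exact.
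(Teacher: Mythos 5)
Your proof is correct, and it rests on the same two-part skeleton as the paper's: (i) the update rule yields a non-negative one-step advantage $Q^{\pi_\beta}(s,\pi_{\beta'}) - Q^{\pi_\beta}(s,\pi_\beta)\geq 0$ at \emph{every} state, and (ii) this uniform advantage is converted into a value improvement. Where you diverge is in step (ii): the paper invokes the performance difference lemma of Kakade and Langford, $V^{\pi'}(s)-V^{\pi}(s)=\frac{1}{1-\gamma}\mathbb{E}_{s'\sim d^{\pi',s}}[A^\pi(s',\pi')]$, as a black box and reads off the sign of the right-hand side, whereas you telescope the Bellman equation directly in the style of the classical policy improvement theorem. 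The two routes are mathematically equivalent (the performance difference lemma is itself proved by the same telescoping), so neither is more general here; your version is more elementary and self-contained, while the paper's is shorter and gives an exact expression for the gap rather than just an inequality. You also make explicit something the paper merely asserts: \emph{why} the advantage is non-negative, namely that the incumbent $\beta(s)$ is a feasible point of the maximization in \cref{eqattp}, so the argmax value can only match or exceed $V^{\pi_\beta}(s)$. Your closing caveats --- that the advantage must be non-negative at all reachable states, not just at $s$, and that the argument is exact only in the tabular setting --- are both correct and consistent with the paper's framing of the result as holding ``at least in the tabular cases.''
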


\begin{proof}
    According to the performance difference lemma in \citep{kakade2002approximately}, in all states $s\in \mathcal{S}$, the expected return difference between the two policies satisfies:
    \begin{align}
        \label{eq:policydiff}
        V^{\pi'}(s)-V^{\pi}(s)=\frac{1}{1-\gamma}\mathbb{E}_{s'\sim d^{\pi', s}}[A^\pi(s', \pi')], ~~ \forall \pi, \pi'
    \end{align}
    where $A^\pi(s, \pi') = Q^\pi(s, \pi') - Q^\pi(s, \pi)$ is the advantage function, $d^{\pi', s}$ is the normalized discounted occupancy induced by policy $\pi'$ from the starting state $s$. The update in \cref{eqattp} results in $Q^{\pi_\beta}(s, \pi_{\beta'})\geq Q^{\pi_\beta}(s, \pi_\beta), \forall s\in\mathcal{S}$. Thus $\beta'(s)$ satisfies:
    \begin{align}
        A^{\pi_{\beta}}(s, \pi_{\beta'})=Q^{\pi_{\beta}}(s, \pi_{\beta'}) - Q^{\pi_{\beta}}(s, \pi_{\beta})\geq0.
    \end{align}
    Using the above in \cref{eq:policydiff}, and combined with the fact that the discount factor $\gamma$ is in range $(0, 1)$, we can rewrite \cref{eq:policydiff} as $V^{\pi_{\beta'}}(s) \geq V^{\pi_{\beta}}(s), \forall s\in\mathcal{S}$.
\end{proof}

\subsubsection{Convergence of combination weight}
\begin{lemma}
(Combination Weight Convergence) For any state $s$, assume the RL policy $\pi_\rl$ converges to the optimum policy $\pi^\star$ that satisfies $Q(s,\pi^\star) > Q(s, \pi), \forall \pi\neq\pi^\star$, then $\beta'(s)=0$ will be the solution to \cref{eqattp} that achieves the optimal policy combination.
\end{lemma}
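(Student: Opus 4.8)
The plan is to reduce the optimization in \cref{eqattp} to a statement about the unique maximizer of the optimal action-value function, exploiting the convergence hypothesis. Upon convergence of $\pi_\rl$ to $\pi^\star$, the sampled action $a_\rl(s)\sim\pi_\rl(s)$ equals the optimum action $a^\star(s)$ (the strict optimality $Q(s,\pi^\star)>Q(s,\pi)$ forces $\pi^\star$ to place all its mass on $a^\star(s)$, so the expectation over $a_\rl(s)$ in \cref{eqattp} is degenerate), and the combined action reduces to $a_\beta(s)=\beta a_\reg(s)+(1-\beta)a^\star(s)$. The first step is to record that, because the regularizer is derived from a sub-optimal model, $a_\reg(s)\neq a^\star(s)$ (this is precisely the remark that $\pi_\reg\neq\pi^\star$); hence $a_\beta(s)=a^\star(s)$ if and only if $\beta=0$, while every $\beta>0$ yields a strict convex combination displaced from $a^\star(s)$.

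The second step is to turn this displacement into a strict loss in the objective. Writing $Q^{\pi^\star}$ for the optimal action-value function, the key chain of inequalities is, for any $\beta>0$,
\begin{align}
\begin{split}
    Q^{\pi_\beta}(s, a_\beta(s)) \leq Q^{\pi^\star}(s, a_\beta(s)) < Q^{\pi^\star}(s, a^\star(s)),
\end{split}
\end{align}
where the first inequality uses that the optimal value function dominates that of any policy pointwise, $Q^{\pi}\leq Q^{\pi^\star}$; and the strict middle inequality uses the hypothesis $Q(s,\pi^\star)>Q(s,\pi)$ for all $\pi\neq\pi^\star$, which (identifying the one-step deviation that plays $a_\beta(s)$ at $s$ and then follows $\pi^\star$) makes $a^\star(s)$ the \emph{unique} maximizer of $Q^{\pi^\star}(s,\cdot)$, combined with $a_\beta(s)\neq a^\star(s)$. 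The right-hand side is exactly the objective of \cref{eqattp} evaluated at $\beta=0$, where $\pi_\beta=\pi^\star$ and $a_\beta(s)=a^\star(s)$. Thus $\beta=0$ strictly dominates every competing value of $\beta$ and is the unique $\argmax$, i.e.\ $\beta'(s)=0$.

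The main obstacle, and the reason this is not a one-line substitution, is that the objective in \cref{eqattp} evaluates $Q^{\pi_\beta}$ rather than a fixed critic: both the action $a_\beta(s)$ \emph{and} the policy in the superscript vary with $\beta$. Merely noting that $a_\beta(s)=a^\star(s)$ only at $\beta=0$ does not by itself preclude some $\beta>0$ producing a larger $Q^{\pi_\beta}(s,a_\beta(s))$ through the changing superscript. The domination step $Q^{\pi_\beta}\leq Q^{\pi^\star}$ is exactly what closes this gap: it decouples the two dependencies by bounding the $\beta$-dependent critic uniformly by the optimal one, after which uniqueness of the optimal action finishes the argument.
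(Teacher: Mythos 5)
Your proof is correct and follows essentially the same route as the paper's: since $\pi_\reg \neq \pi^\star$, the combined action $\beta a_\reg(s) + (1-\beta)a^\star(s)$ coincides with $a^\star(s)$ only at $\beta = 0$, so strict uniqueness of the optimum forces $\beta'(s) = 0$. Your additional chain $Q^{\pi_\beta}(s, a_\beta(s)) \leq Q^{\pi^\star}(s, a_\beta(s)) < Q^{\pi^\star}(s, a^\star(s))$ is a welcome refinement: the paper's two-line argument silently ignores that the critic $Q^{\pi_\beta}$ appearing in \cref{eqattp} also varies with $\beta$, and your domination bound is precisely what rules out some $\beta > 0$ achieving a larger objective value through that dependence.
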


\begin{proof}
  Since a sub-optimal model is used to derive $\pi_\reg$, $\pi_\reg\neq \pi^\star$. Let $a^\star(s)\sim\pi^\star(s)$ denote the optimum action at state $s$. If $\beta(s) \neq 0$, then $\beta(s) a_\reg(s)+(1-\beta(s)) a_\rl(s) = \beta(s) a_\reg(s)+(1-\beta(s)) a^\star(s) \neq a^\star(s)$. Therefore, the solution to \cref{eqattp}, i.e., the updated focus weight $\beta'(s)$, can only be 0.
\end{proof}

\subsubsection{Convergence to the RL policy}
\label{appt1}

\begin{theorem}
(Policy Combination Bias) For any state $s$, the distance between the combined action $a_\beta(s)$ and the optimal action $a^\star(s)$ has the following lower-bound:
\begin{align}
\begin{split}
    |a_\beta(s)-a^\star(s)| \geq |a_\reg(s)-a^\star(s)| - (1-\beta(s)) |a_\reg(s)-a_\rl(s)|.
\end{split}
\end{align}
If a Gaussian RL policy $\pi_\rl$ converges to the optimum policy $\pi^\star(s)$ with $Q(s,\pi^\star) > Q(s, \pi), \forall \pi\neq\pi^\star$, then the combined policy $\pi_\beta(s)$ can have unbiased convergence to the optimum Gaussian policy $\pi^\star$ with total variance distance $D_\mathrm{TV}(\pi_\beta(s), \pi^\star(s)) = 0$.
\end{theorem}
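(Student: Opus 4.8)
The plan is to prove the two parts separately: the lower bound is pure algebra via the reverse triangle inequality, while the unbiased-convergence claim chains together the earlier combination-weight convergence lemma with the Gaussian form of the combined policy established in the proof of \cref{theolemma1}.

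For the lower bound, I would rewrite the combined action using its definition $a_\beta(s) = \beta(s) a_\reg(s) + (1-\beta(s)) a_\rl(s)$ and decompose the deviation from the optimum by adding and subtracting $a_\reg(s)$, noting that $a_\beta(s) - a_\reg(s) = (1-\beta(s))\bigl(a_\rl(s) - a_\reg(s)\bigr)$, so that
\begin{equation*}
a_\beta(s) - a^\star(s) = \bigl(a_\reg(s) - a^\star(s)\bigr) + (1-\beta(s))\bigl(a_\rl(s) - a_\reg(s)\bigr).
\end{equation*}
Applying the reverse triangle inequality $|x + y| \geq |x| - |y|$ with $x = a_\reg(s) - a^\star(s)$ and $y = (1-\beta(s))\bigl(a_\rl(s) - a_\reg(s)\bigr)$, then using $1 - \beta(s) \geq 0$ (since $\beta(s) \in [0,1]$) together with $|a_\rl(s)-a_\reg(s)| = |a_\reg(s)-a_\rl(s)|$, yields the claimed bound directly.

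For the unbiased-convergence claim, I would invoke \cref{theolemmabeta}: when $\pi_\rl$ converges to the unique optimum $\pi^\star$, the focus-module update in \cref{eqattp} drives $\beta'(s) = 0$. Substituting $\beta(s) = 0$ into the Gaussian form of the combined policy derived in the proof of \cref{theolemma1}, namely $a_\beta(s) \sim \mathcal{N}\bigl(\beta a_\reg(s) + (1-\beta)\bar{\pi}_\rl(s),\, (1-\beta)^2 \Sigma\bigr)$, collapses the mean to $\bar{\pi}_\rl(s)$ and the covariance to $\Sigma$, so $\pi_\beta(s)$ coincides exactly with $\pi_\rl(s) = \pi^\star(s)$. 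Two identical Gaussians have total variation distance zero, giving $D_\mathrm{TV}(\pi_\beta(s), \pi^\star(s)) = 0$.

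The argument is mostly mechanical, so the main things to get right are the direction of the inequality and the bookkeeping on the $\beta$-dependence. The one genuine subtlety is that the unbiased-convergence claim is conditional: it holds only because the focus-module objective actually attains $\beta' = 0$ at the optimum (guaranteed by \cref{theolemmabeta}) and because \emph{both} the mean and the covariance of the combined Gaussian return to those of $\pi^\star$ at $\beta = 0$. If only the mean matched, one would not obtain $D_\mathrm{TV} = 0$; hence the point worth verifying explicitly is that the covariance $(1-\beta)^2\Sigma$ also reduces to exactly $\Sigma$ at $\beta = 0$, which it does.
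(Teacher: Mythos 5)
Your proposal is correct and follows essentially the same route as the paper: the lower bound via $|a_\reg(s)-a_\beta(s)| = (1-\beta(s))|a_\reg(s)-a_\rl(s)|$ and the (reverse) triangle inequality, and the convergence claim by invoking \cref{theolemmabeta} to get $\beta(s)=0$ and observing that the combined Gaussian then collapses in both mean and covariance to $\pi_\rl = \pi^\star$. The only cosmetic difference is that the paper recovers the mean at $\beta=0$ by passing through the regularized-optimization characterization of \cref{theolemma1}, whereas you substitute $\beta=0$ directly into the Gaussian form of $a_\beta(s)$; both are valid and your explicit check that the covariance $(1-\beta)^2\Sigma$ also reduces to $\Sigma$ is the right point to emphasize.
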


\begin{proof}
    First, we prove the lower-bound for $ |a_\beta(s)-a^\star(s)|$. The proof of lower-bound is similar to \citep{cheng2019control}. We expand the distance between the safety regularizer action $a_\reg$ and the combined action $a_\beta$ as follows:
    \begin{align}
    \begin{split}
        |a_\reg(s) - a_\beta(s)| &= |a_\reg(s) - \beta(s)a_\reg(s) - (1-\beta(s))a_\rl(s)| \\
        &= (1-\beta(s)) |a_\reg(s)-a_\rl(s)|.
    \end{split}
    \end{align}
    Following triangle inequality, the distance between the combined action and the optimum action is as follows:
    \begin{align}
    \begin{split}
        |a_\beta(s)-a^\star(s)| &\geq |a_\reg(s)-a^\star(s)| - |a_\reg(s)-a_\beta(s)| \\
        &=|a_\reg(s)-a^\star(s)| - (1-\beta(s)) |a_\reg(s)-a_\rl(s)|.
    \end{split}
    \end{align}

    To prove the convergence of $\pi_\beta$, we first examine the convergence of $\beta(s)$. According to \cref{theolemmabeta}, $\beta(s)$ converges to 0 under the assumption that the RL policy $\pi_\rl$ converges to $\pi^\star(s)$ with $Q(s,\pi^\star) > Q(s, \pi), \forall \pi\neq\pi^\star$. Because $\beta(s)=0$, the combined policy has variance equal to $\pi_\theta$. According to \cref{theolemma1}, the following holds for any state $s$:
    \begin{align}
    \begin{split}
        \mathbb{E}[a_\beta(s)] &= \argmin_a \|a-\argmax_{a_\rl(s)}Q(s, a_\rl(s))\|_\Sigma + \frac{\beta(s)}{1-\beta(s)} \|a-a_\reg(s)\|_\Sigma \\
        &= \argmin_a \|a-\argmax_{a_\rl(s)}Q(s, a_\rl(s))\|_\Sigma = \bar{\pi}_{RL}(s).
    \end{split}
    \end{align}
    Thus the the converged combined policy $\pi_\beta(s)=\pi_\rl(s)=\pi^\star(s)$ with total variance distance $D_\mathrm{TV}(\pi_\beta(s), \pi^\star(s)) = 0$.
\end{proof}

\subsection{Environments for validations}
\label{appenvs}
In this section, we introduce the four environments (\textit{Glucose}, \textit{BiGlucose}, \textit{CSTR}, and \textit{Cart Pole}) used in \cref{secexp}, with detailed environment models, parameters, and reward functions. MPC minimizes the stage cost $J_k$ and terminal cost $J_N$, while RL maximizes rewards $r$, for consistency, we set the MPC costs to $J_k=J_N=-r$ when validating the MPC in the environments.

\subsubsection{Glucose}
\label{app:glucose_env}

\begin{table}[ht]
\caption{Glucose parameters for the estimated model and the actual environment}
\label{tabglucosemp}
\vskip 0.15in
\begin{center}
\begin{small}
\begin{tabular}{llcc}
\toprule
Parameters  & Unit          & Estimated Model   & Actual Environment \\
\midrule
$G_b$       & mg/dL         & 138               & 138     \\
$I_b$       & $\mu$U/mL     & 7                 & 7       \\
$n$         & min$^{-1}$    & 0.2814            & 0.2     \\
$p_1$       & min$^{-1}$    & 0                 & 0.       \\
$p_2$       & min$^{-1}$    & 0.0142            & 0.005     \\
$p_3$       & min$^{-1}$    & 15e-6             & 5e-6     \\
$D_0$       & -             & 4                 & 4        \\
$dt$        & min           & 10                & 10      \\
\bottomrule
\end{tabular}
\end{small}
\end{center}
\vskip -0.1in
\end{table}

The Glucose environment simulates blood glucose level, denoted as $G$, against meal-induced disturbances which elevate $G$ and cause hyperglycemia. The observations are $(G, \dot{G}, t)$, where $t$ is the time passed after meal ingestion. The action is the injection of insulin $a_I$, which gradually lowers $G$ but with a large delay. Excessive injection of insulin can cause life-threatening hypoglycemia. Safety constraints are imposed on the blood glucose level $G$, as both the elevated and reduced $G$ result in severe health risks (hyperglycemia and hypoglycemia, respectively). The blood glucose model contains 3 state variables regulated by the ordinary differential equations (ODEs) given below:
\begin{align}
\begin{split}
\dot{G} &= -p_1(G - G_b) - GX + D_t \\
\dot{X} &= -p_2X + p_3(I - I_b)     \\
\dot{I} &= -n(I - I_b) + a_I,
\label{apeqglu}
\end{split}
\end{align}
among which only $G$ can be observed. Because not all states are observed, the closed loop is maintained only for the observed $G$ for MPC and RL-AR. The term $D_t$ represents the time-varying disturbance caused by the meal disturbance:
\begin{align}
    D_t =D_0 \exp(-0.01t).
\end{align}

The model parameters adopted by the estimated model and the actual environment are given by \cref{tabglucosemp}. The initial states are determined by setting the left-hand side of \cref{apeqglu} to zeros and solving the steady-state equations. The reward function for Glucose is the Magni risk function \citep{fox2020deep}, which gives stronger penalties for low blood glucose levels to prevent hypoglycemia:
\begin{align}
    r = 
    \begin{cases}
    -\left(3.35506\times((\ln G)^{0.8353}-3.7932)\right)^2, & 10\leq G \leq 1000\\
    -1e5, & \text{otherwise}\\
    \end{cases}.
\end{align}

\subsubsection{BiGlucose}
\label{appbiglu}
\begin{table}[ht]
\caption{BiGlucose parameters for the estimated model and the actual environment}
\label{tabbiglucosemp}
\vskip 0.15in
\begin{center}
\begin{small}
\begin{tabular}{llcc}
\toprule
Parameter   & Unit      & Estimated Model       & Actual Environment \\
\midrule
$D_G$       &kg         & 0.08                  & 0.08 \\
$V_G$       &L/kg       & 0.14                  & 0.18 \\
$k_{12}$    &min$^{-1}$ & 0.0968                & 0.0343 \\
$F_{01}$    &mmol/(kg min) & 0.0199             & 0.0121 \\
EGP$_0$     &mmol/(kg min) & 0.0213             & 0.0148 \\
$A_g$       &-          & 0.8                   & 0.8 \\
$t{max, G}$ &min        & 40                    & 40 \\
$t_{max, I}$ &min       & 55                    & 55 \\
$V_I$       &L kg$^{-1}$ & 0.12                 & 0.12 \\
$k_e$       &min$^{-1}$ & 0.138                 & 0.138 \\
$k_{a1}$    &min$^{-1}$ & 0.0088                & 0.0031 \\
$k_{a2}$    &min$^{-1}$ & 0.0302                & 0.0752 \\
$k_{a3}$    &min$^{-1}$ & 0.0118                & 0.0472 \\
$k_{b1}$    &L/(min$^2$ mU) & 7.58e-5           & 9.11e-6 \\
$k_{b2}$    &L/(min$^2$ mU) & 1.42e-5           & 6.77e-6 \\
$k_{b3}$    &L/(min mU) & 8.5e-4                & 1.89e-3 \\
$t_{max, N}$ &min       & 20.59                 & 32.46 \\
$k_N$ &min$^{-1}$       & 0.735                 & 0.620 \\
$V_N$ &mL kg$^{-1}$     & 23.46                 & 16.06 \\
$p$ &min$^{-1}$         & 0.074                 & 0.016 \\
$S_N \cdot 10^{-4}$&mL/pg min$^{-1}$& 1.98      & 1.96\\
$M_g$       &g/mol      & 180.16                & 180.16 \\
$BW$        &kg         & 68.5                  & 68.5 \\
$N_b$       &pg/mL      & 48.13                 & 48.13\\
$dt$        &min        &10                     &10\\
\bottomrule
\end{tabular}
\end{small}
\end{center}
\vskip -0.1in
\end{table}

The BiGlucose environment simulates blood glucose level $G$ against meal-induced disturbances, which elevate $G$ and cause hyperglycemia. The observations are $(G, \dot{G}, t)$, where $t$ is the time passed after meal ingestion. The actions are insulin and glucagon injections $(a_I, a_N)$. Insulin injection $a_I$ lowers $G$ but causes hypoglycemia when overdosed. Glucagon injection $a_N$ elevates $G$ and thus can be used to mitigate the hypoglycemia caused by $a_I$. Similar to Glucose, safety constraints are imposed on $G$. The blood glucose model contains 12 internal states (11 of them unobservable) and 2 actions with large delays, regulated by the ODEs given below:
\begin{align}
\begin{split}
\dot{Q}_1 &= -F_{01}^c(G) - x_1Q_1 + k_{12}Q_2 - F_R  \\ &+ (1-x_3)\mathrm{EGP}_0 +c_{conv}U_G + YQ_1\\
\dot{Q}_2 &= x_1Q_1 - (k_{12}+x_2)Q_2\\
\dot{x}_1 &= -k_{a1}x_1 + k_{b1}I\\
\dot{x}_2 &= -k_{a2}x_2 + k_{b2}I\\
\dot{x}_3 &= -k_{a3}x_3 + k_{b3}I\\
\dot{S}_1 &= a_I - \frac{S_1}{t_{max, I}}\\
\dot{S}_2 &= \frac{S_1}{t_{max, I}} - \frac{S_2}{t_{max, I}}\\
\dot{I}   &= \frac{S_2}{V_I t_{max, I}} - k_e I\\
\dot{Z}_1 &= a_N - \frac{Z_1}{t_{max, N}}\\
\dot{Z}_2 &= \frac{Z_1}{t_{max, N}} - \frac{Z_2}{t_{max, N}}\\
\dot{N}   &= -k_N(N - N_b) + \frac{Z_2}{V_N t_{max,N}}\\
\dot{Y}   &= -pY + pS_N(N - N_b),
\label{apeqbiglu}
\end{split}
\end{align}
where the intermediate variables $F^c_{01}$ and $F_R$ are piecewise functions of the measurable blood glucose mass $G=18\times Q_1/V_G$, as shown below:
\begin{align}
    &F^c_{01} = \begin{cases}F_{01}, & G\geq81\text{mg/dL}\\F_{01} G/81, & \text{otherwise}\end{cases}, \\
    &F_R = \begin{cases}0.003(G/18-9)V_G, & G\geq152\text{mg/dL}\\0, & \text{otherwise}\end{cases}.
\end{align}

Since only $G$ can be observed among the 12 states, the closed loop is maintained only for $G$ in MPC and RL-AR. The term $U_G$ represents the time-varying disturbance caused by the meal disturbance:
\begin{align}
    U_G = \frac{D_G A_G}{t_{max, G}^2}\cdot t \cdot e^{-t/t_{max, G}}.
\end{align}

The model parameters adopted by the estimated model and the actual environment are given by \cref{tabbiglucosemp}. This extended model proposed by \citet{herrero2013composite} and \citet{kalisvaart2023bi} captures more complicated blood glucose dynamics, allowing the use of both insulin injection and glucagon injection as actions. This leads to better regulation performance, but also drastically increases the complexity of the problem due to its large number of unobservable states, delayed action responses, and nondifferentiable piecewise dynamics \citep{kalisvaart2023bi}.

The initial states are determined by setting the left-hand side of \cref{apeqbiglu} to zeros and solving the steady-state equations. The reward function for BiGlucose is the Magni risk function \citep{fox2020deep}, which gives stronger penalties for low blood glucose levels to prevent hypoglycemia:
\begin{align}
    r = 
    \begin{cases}
    -10\times\left(3.35506\times((\ln G)^{0.8353}-3.7932)\right)^2, & 10\leq G \leq 1000\\
    -1e5, & \text{otherwise}\\
    \end{cases}.
\end{align}

\subsubsection{CSTR}

\begin{table}[ht]
\caption{CSTR actual environment model parameters}
\label{tabcstrmp}
\vskip 0.15in
\begin{center}
\begin{small}
\begin{tabular}{llc|llc}
\toprule
Parameter    & Unit          & Actual Environment & Parameter    & Unit          & Actual Environment \\
\midrule
$k_{0, ab}$  & h$^{-1}$      & 1.287e12           & $\rho$       & kg/L          & 0.9342             \\
$k_{0, bc}$  & h$^{-1}$      & 1.287e12           & $C_p$        & kJ/kg.K       & 3.01               \\
$k_{0, ad}$  & L/mol.h       & 9.043e9            & $C_{p, k}$   & kJ/kg.K       & 2.0                \\
$R_{gas}$    & kJ/mol.K      & 8.3144621e-3       & $A_R$        & m$^2$         & 0.215              \\
$E_{A, ab}$  & kJ/mol        & 9758.3             & $V_R$        & L             & 10.01              \\
$E_{A, bc}$  & kJ/mol        & 9758.3             & $m_k$        & kg            & 5.0                \\
$E_{A, ad}$  & kJ/mol        & 8560.0             & $T_{in}$     & $^\circ$C     & 130.0              \\
$H_{R, ab}$  & kJ/mol        & 4.2                & $K_w$        & kJ/h m$^2$ K  & 4032.0             \\
$H_{R, bc}$  & kJ/mol        & -11.0              & $C_{A, 0}$   & mol/L         & 5.1                \\
$H_{R, ad}$  & kJ/mol        & -41.85             & dt           & h             & 0.05               \\
\bottomrule
\end{tabular}
\end{small}
\end{center}
\vskip -0.1in
\end{table}

\begin{table}[ht]
\caption{CSTR different parameters for the estimated model and the actual environment}
\label{tabcstrmpmis}
\vskip 0.15in
\begin{center}
\begin{small}
\begin{tabular}{llcc}
\toprule
Parameter   & Estimated model   & Actual Environment \\
\midrule    
$\alpha$    & 1                 & 1.05              \\
$\beta$     & 1                 & 1.1               \\
\bottomrule
\end{tabular}
\end{small}
\end{center}
\vskip -0.1in
\end{table}

The CSTR environment simulates the concentration of target chemicals in a continuous stirred tank reactor. The observations are $(C_A, C_B, T_R, T_K)$, where $C_A$ and $C_B$ are the concentrations of two chemicals, $T_R$ is the temperature of the reactor, and $T_K$ is the temperatures of the cooling jacket. The actions are the feed and the heat flow $(a_F, a_Q)$. Safety constraints are imposed on the chemical concentrations and reactor temperature, as crossing the safe boundaries for any of them can lead to tank failure or even explosions. This model contains four state variables regulated by the ODEs given below:
\begin{align}
\begin{split}
\dot{C_A} &= a_F(C_{A, 0} - C_A) - K_1C_A - K_3C_A^2 \\
\dot{C_B} &= -a_FC_B + K_1C_A - K_2C_B \\
\dot{T_R} &=\frac{K_1 C_A H_{R, ab} + K_2 C_B H_{R, bc} + K_3 C_A^2 H_{R, ad}}{-\rho C_p} + \frac{K_w A_R (T_K-T_R)}{\rho C_p V_R} + a_F (T_{in} - T_R) \\
\dot{T_K} &= \frac{a_Q + K_w A_R (T_R-T_K)}{m_k C_{p, k}},
\end{split}
\end{align}
where the intermediate variables are:
\begin{align}
\begin{split}
K_1 = &= \beta k_{0, ab} \exp(\frac{-E_{A, ab}}{T_R+273.15}) \\
K_2 = &= k_{0, bc} \exp(\frac{-E_{A, bc}}{T_R+273.15}) \\
K_3 = &= k_{0, ad} \exp(\frac{-\alpha E_{A, bc}}{T_R+273.15}).
\end{split}
\end{align}

The model parameters adopted by the estimated model and the actual environment are given by \cref{tabcstrmp} and \cref{tabcstrmpmis}. The initial concentration of the target chemical $C_{B, 0}$ is set to 0.5. The reward function for CSTR is as follows:
\begin{align}
    r = 
    \begin{cases}
    -(100\times(C_B-0.6)^2, & \scriptstyle 0.1\leq C_A \leq 2,~0.1\leq C_B \leq 2,~50\leq T_R \leq 200,~50\leq T_K \leq 150\\
    -(100\times(C_B-0.6)^2-1e4, & \text{otherwise}\\
    \end{cases}.
\end{align}

\subsubsection{Cart Pole}

\begin{table}[ht]
\caption{Cart Pole parameters for the estimated model and the actual environment}
\label{tabcpmp}
\vskip 0.15in
\begin{center}
\begin{small}
\begin{sc}
\begin{tabular}{llcc}
\toprule
Parameter     & Unit                  & Estimated Model & Actual Environment   \\
\midrule
$g$           & $\mathrm{m \cdot s^{-2}}$ & 9.8   & 9.8  \\
$m_c$         & $\mathrm{kg}$             & 1.0   & 0.8  \\
$m_p$         & $\mathrm{kg}$             & 0.1   & 0.3  \\
$l$           & $\mathrm{m}$              & 0.5   & 0.6  \\
$dt$          & $\mathrm{s}$              & 0.02  & 0.02 \\
\bottomrule
\end{tabular}
\end{sc}
\end{small}
\end{center}
\vskip -0.1in
\end{table}

The Cart Pole environment simulates an inverted pole on a cart. The environment is the continuous action adaptation of the gymnasium environment \citep{towers_gymnasium_2023}. The observations are $(x, \dot{x}, \theta, \dot{\theta})$, where $x$ is the position of the cart and $\theta$ is the angle of the pole. The action is the horizontal force $a_f$. Safety constraints are imposed on $x$ and $\theta$ as the control fails if the cart reaches the end of its rail or the pole falls over. This model contains four state variables regulated by the ODEs given below:
\begin{align}
\begin{split}
\ddot{\theta} &= \frac{g\sin\theta - d\cos\theta}{l(4/3 - m_p\cos^2\theta/(m_p + m_c))} \\
\ddot{x}      &= d - \frac{m_p l \ddot{\theta} \cos\theta}{m_p + m_c}
\end{split}
\end{align}
The intermediate variable $d$ is:
\begin{align}
    d = \frac{10\times a_f + m_p l \dot{\theta}^2 \sin\theta}{m_p + m_c}
\end{align}

The model parameters adopted by the estimated model and the actual environment are given by \cref{tabcpmp}. The initial tilt of the pole is 6 degrees. The reward function for Cart Pole is as follows:
\begin{align}
    r = 
    \begin{cases}
    -1000\theta^2 - \max(0, |x|-0.25), & \scriptstyle -2.4 \leq x \leq 2.4,~-12\pi/360 \leq \theta \leq 12\pi/360\\
    -1000\theta^2 - \max(0, |x|-0.25) - 1e4, & \text{otherwise}\\
    \end{cases}.
\end{align}

\begin{table}[t]
    \caption{RL-AR hyperparameters. The baseline methods utilized the same network structures and training hyperparameters.}
    \centering
    \begin{tabular}{ll}
        \hline
        \textbf{Parameter} & \textbf{Value} \\
        \hline
        Learning rate for Q network & $1 \times 10^{-3}$ \\
        Learning rate for policy network & $3 \times 10^{-4}$ \\
        Batch size $|\mathcal{B}|$ for updating & $256$ \\
        Start learning & $256$ \\
        Target Q network update factor $\tau$ & $0.005$ \\
        Forgetting factor $\gamma$ & $0.99$ \\
        Frequency for updating policy network & $2$ \\
        Frequency for updating target network & $1$ \\
        Learning rate for the focus module & $5 \times 10^{-6}$ \\
        focus module pretraining threshold $1-\epsilon$ & $0.999$ \\
        Minimum log policy variance & $-5$ \\
        Maximum log policy variance & $2$ \\
        Policy network hidden layers & $[256, 256]$ \\
        Q Network hidden layers & $[256, 256]$ \\
        focus module hidden layers & $[128, 32]$ \\
        Glucose MPC horizon & 100 \\
        Other envs MPC horizon & 20 \\
        \hline
    \end{tabular}
    \label{tab:hyperparameters}
\end{table}

\subsection{Implementation details}
\label{apphyper}

Experiments are conducted using Python 3.12.5 on an Ubuntu 22.04 machine with 13th Gen Intel Core i7-13850HX CPU, Nvidia RTX 3500 Ada GPU, and 32GB RAM. For RL-AR, the average time for taking a step (interaction and network updates) is 0.0235 s for Glucose, 0.0667 s for BiGlucose, 0.0378 s for CSTR, and 0.0206 s for Cart Pole. The above decision times are practical for real-time control in these environments.

The MPC and the safety regularizer in RL-AR are implemented using \citep{fiedler2023mpc}. The implementations for SAC and the RL agent in RL-AR are based on \citep{huang2022cleanrl}. RPL, CPO, and SEditor follow the implementations in \citep{silver2014deterministic}, \citep{ray2019benchmarking}, and \citep{yu2022towards}, respectively. The same hyperparameters are used by RL-AR for all experiments in \cref{secexp}, which are listed in \cref{tab:hyperparameters}. The baseline methods' network structure and training parameters are set to be the same as RL-AR.

\subsection{Additional experiment results}
\label{appadd}

\subsubsection{State-dependent vs. scalar policy combination}

\begin{figure}[ht]
\centering
\centerline{\includegraphics[width=\textwidth]{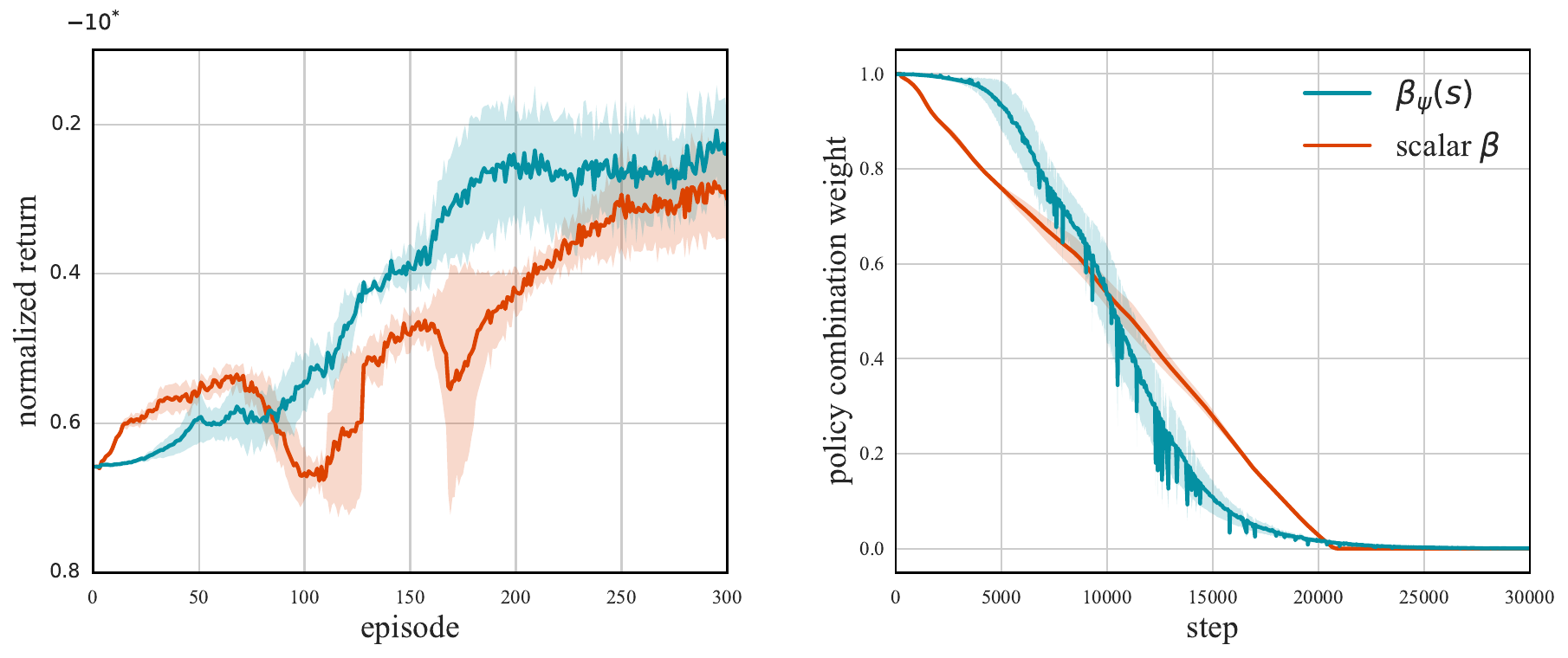}}
\caption{Comparing the state-dependent focus module $\beta_\psi(s)$ with the scalar $\beta$ by plotting the normalized return curves (left) and focus weight curves (right) in the Glucose environment. Shaded areas indicate standard deviations.}
\label{figBetaCompare}
\end{figure}

As discussed in \cref{secmetho} and \cref{theothem2}, using a state-dependent focus module $\beta(s)$ for policy combination (compared to using a scalar weight $\beta$) offers the advantage of achieving monotonic performance improvements at least in the tabular setting. Here, we empirically verify the advantage of using the state-dependent $\beta_\psi(s)$ versus a scalar weight in \cref{figBetaCompare} by analyzing the normalized returns (left panel) and the focus weights (right panel) during training in the Glucose environment (\cref{figBetaCompare}). The mean (solid lines) and standard deviation (shaded area) in \cref{figBetaCompare} are obtained from 5 independent runs using different random seeds.

Practically, RL-AR with state-dependent $\beta_\psi(s)$ does not show a strictly monotonic policy improvement, which can be attributed to the neural network approximation. However, improvements in the normalized return are significantly more steady when using $\beta_\psi(s)$ rather than a scalar $\beta$, as shown by the blue and red curves in the left panel of \cref{figBetaCompare}. The right panel of \cref{figBetaCompare} shows the evolution of the focus weights (used by RL-AR) versus the training steps. Although both $\beta_\psi(s)$ and $\beta$ converge to zero after approximately the same number of steps, $\beta_\psi(s)$ applies different focus weights depending on specific states encountered as seen by the fluctuations in the blue curve in the right panel of \cref{figBetaCompare}.

\subsubsection{Entropy Regularization}

\begin{figure}[ht]
\centering
\centerline{\includegraphics[width=0.833\textwidth]{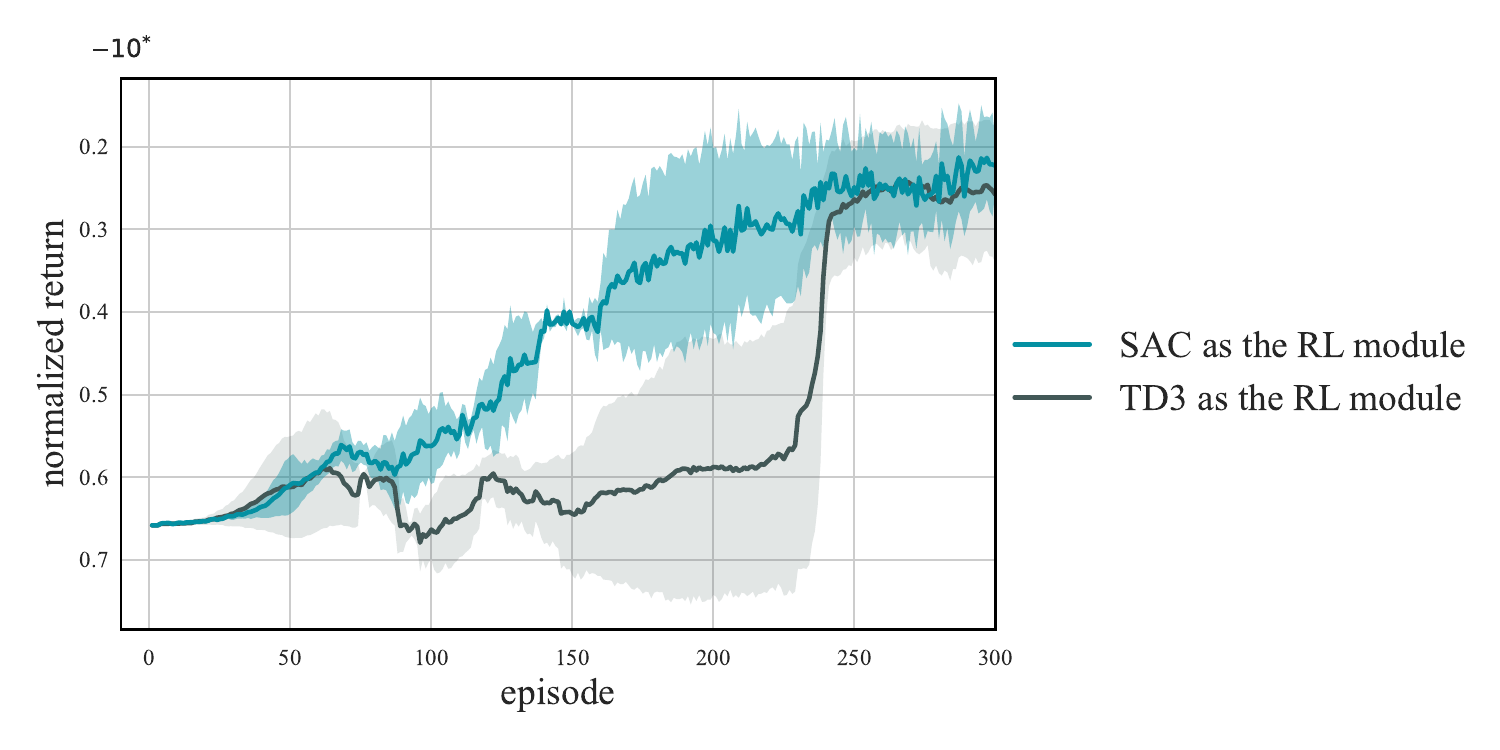}}
\caption{Comparison of normalized return between using the SAC and using TD3 \citep{fujimoto2018addressing} as the RL agent in the Glucose environment (standard deviations are shown in the shaded area). The main difference between SAC and TD3 is that SAC has the entropy regularization terms in its objectives, which are intended to encourage diverse policies and stabilize training.}
\label{fignoent}
\end{figure}

We conduct an ablation study to compare using SAC as the RL agent in RL-AR with using another state-of-the-art RL algorithm, TD3 \citep{fujimoto2018addressing}. The key difference between SAC and TD3 is that SAC incorporates the entropy regularization term, $-\alpha\log P_{\pi_\theta}(a|s)$, in \cref{eqsacq} and \cref{eqsacpolicy}. The normalized return curves, shown in \cref{fignoent}, demonstrate that RL-AR with SAC as the RL agent achieves a higher normalized return at a faster rate. While RL-AR promotes safety and stability at the cost of reducing the exploration intensity of the combined policy (as proved in \cref{theolemma1}), which could potentially slow down the discovery of the optimal policy, SAC's entropy regularization counteracts this by promoting the use of more diverse policies. A closer examination reveals that the SAC curve locally exhibits more minor fluctuations than the TD3 curve, illustrating SAC's ability to use more diverse policies.

\subsubsection{focus weight curves during training}

\begin{figure}[ht]
\centering
\centerline{\includegraphics[width=\columnwidth]{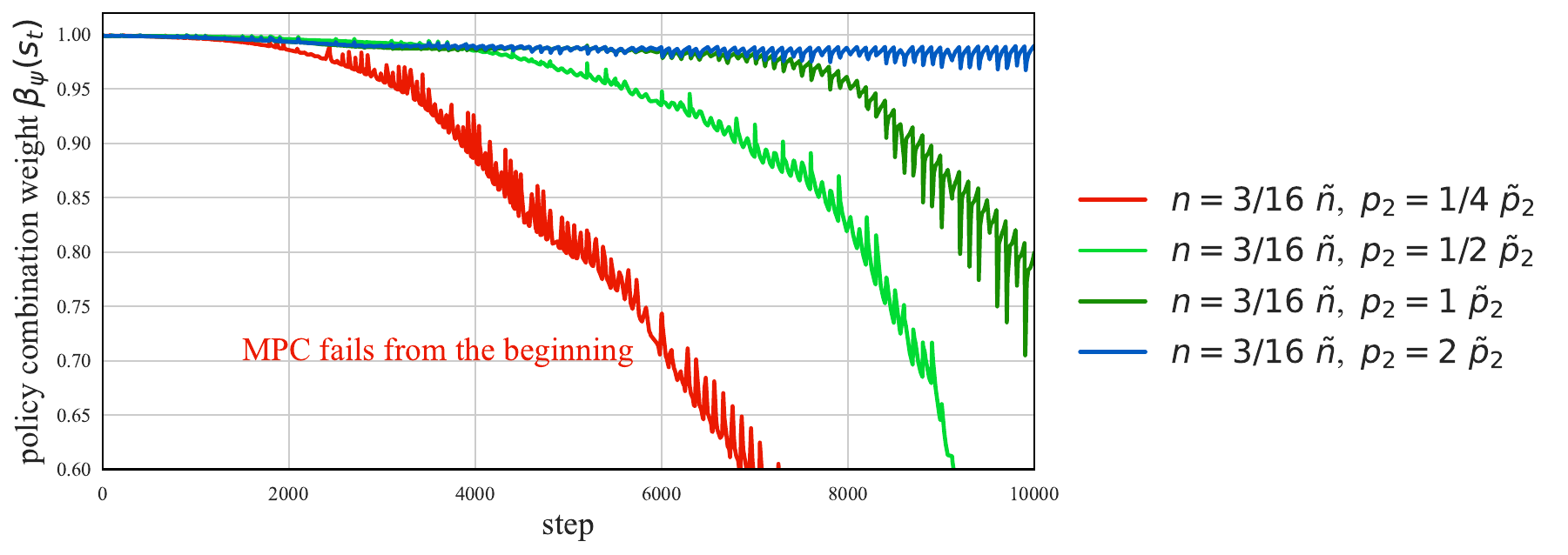}}
\caption{The focus weights when training with varying levels of discrepancies between the estimated Glucose model (with parameters $\tilde{p}_2, \tilde{n}$) and the actual Glucose environment (with parameters $p_2, n$).}
\label{figBetaCurveParaSens}
\end{figure}

In \cref{figBetaCurveParaSens}, we show several $\beta_\psi(s_t)$ curves from training RL-AR in various Glucose environments, created by varying the
environment model parameters $n$ and $p_2$. Let $\tilde{n}$ and $\tilde{p}_2$ be the parameters of the estimated model $\tilde{f}$, the actual environment models have $n=3\tilde{n}/16$ and $p_2={\tilde{p}_2/4, \tilde{p}_2/2, 1\tilde{p}_2, 2\tilde{p}_2}$ to mimic deviating characteristics of new patients. When there are large discrepancies between the environment and $\tilde{f}$ (e.g., $n=3\tilde{n}/16$ and $p_2=\tilde{p}_2/4$), the safety regularize fails initially, but the focus weight $\beta_\psi(s_t)$ decreases rapidly, enabling RL-AR to recover from initial failures by rapidly shifting from the sub-optimal safety regularizer policy to the stronger learned RL policy. Conversely, $\beta_\psi(s_t)$ converges more slowly to zero when the safety regularizer performs well in the actual environment, as it becomes more challenging to find an RL policy that significantly outperforms the safety regularizer policy in such cases.

\subsubsection{The Acrobot environment}

\begin{figure}[ht]
\centering
\centerline{\includegraphics[width=\textwidth]{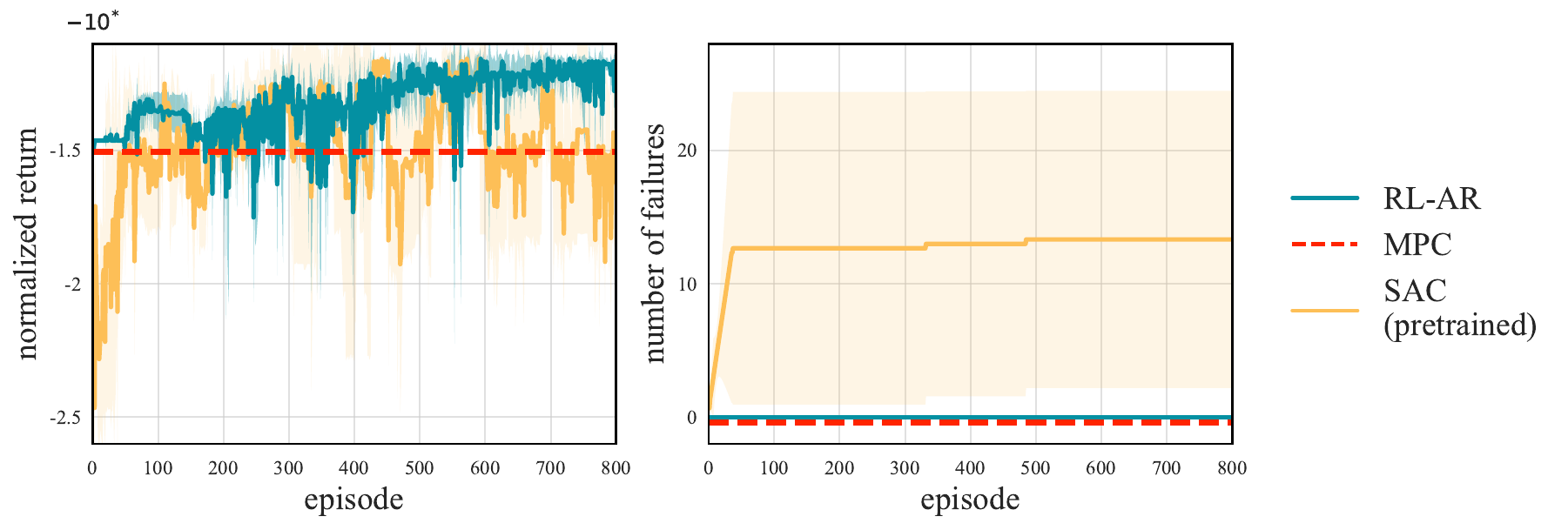}}
\caption{Normalized return (left) and the number of failures (right) during training in the Acrobot environment (standard deviations
are shown in the shaded area).}
\label{figacrobot}
\end{figure}

In \cref{secexp}, we evaluate RL-AR in several widely recognized challenging safety-critical environments. For example, the BiGlucose environment (see \cref{appbiglu}) involves 11 unobservable states, two actions with significant delays, and complex, nondifferentiable piecewise dynamics. Here, we provide further evidence of the effectiveness of RL-AR in the Acrobot environment, an adaptation of the Gymnasium environment \citep{towers_gymnasium_2023} with a continuous action space (\cref{figacrobot}). The Acrobot environment simulates two links connected by a joint, with one end of the connected links fixed. The links start facing downward. The objective is to swing the free end above a given target height as quickly as possible by applying torque to the joint. Failure is defined as the tip not reaching the target height in 400 time steps. For validation, we set the actual Acrobot environment parameters $l_2 = 1.1\tilde{l}_2, m_2=\tilde{m}_2$, where $\tilde{l}_2=1.0$ is the lower link length parameter of the estimated model $\tilde{f}$, and $\tilde{m}_2=1.0$ is the lower link weight parameter of the estimated model $\tilde{f}$.

The Acrobot environment is particularly challenging for RL-AR’s policy regularizer due to: i) its highly nonlinear, under-actuated dynamics, and ii) its definition of failure as not achieving the target within a given time limit, which cannot be easily formalized as a constraint. As \cref{figacrobot} shows, RL-AR can swing up the tip in the first episode by initially relying on the viable safety regularizer policy. Throughout training, RL-AR ensures safety while converging to a similar normalized return as SAC, which focuses only on return and fails many times during training. This illustrates RL-AR's robustness in challenging tasks and its potential in applications where failures are defined in terms of time limit and are hard to formalize as constraints.

\end{document}